\documentclass{article}

\usepackage{iclr2025_conference,times}
\usepackage{multirow}
\usepackage{amsmath}
\usepackage{amssymb}
\usepackage{mathtools}
\usepackage{amsthm}
\usepackage{xspace}
\usepackage{algorithm}
\usepackage{subcaption}
\usepackage{comment}
\usepackage{xcolor}
\usepackage{booktabs}
\usepackage{comment}
\usepackage{url}
\usepackage{enumitem}

\theoremstyle{plain}
\newtheorem{theorem}{Theorem}[section]

\newtheorem{lemma}[theorem]{Lemma}
\newtheorem{corollary}[theorem]{Corollary}
\theoremstyle{definition}

\newtheorem{assumption}[theorem]{Assumption}
\theoremstyle{remark}

\usepackage{algorithm,algcompatible}
\renewcommand{\COMMENT}[2][.5\linewidth]{%
  \leavevmode\hfill\makebox[#1][l]{//~#2}}
\algnewcommand\algorithmicto{\textbf{to}}
\algnewcommand\RETURN{\State \textbf{return} }
\usepackage{setspace}
\usepackage{hyperref}

\newcommand{\X}{\pmb{X}}

\newcommand{\z}{\pmb{z}}
\newcommand{\m}{\pmb{m}}
\newcommand{\vv}{\pmb{v}}

\newcommand{\g}{\mathbf{g}}

\newcommand{\D}{\mathcal{D}}
\newcommand{\LL}{\mathcal{L}}

\newcommand{\N}{\mathcal{N}}

\newcommand{\thet}{\pmb{\theta}}
\newcommand{\M}{\mathcal{M}}

\newcommand{\E}{\mathbb{E}}

\DeclareMathOperator*{\argmax}{arg\,max}

\newcommand{\alg}{{CoLM}}
\newcommand{\name}{{Coresets for Training LLMs}}

\title{Mini-batch Coresets for Memory-efficient 
Language Model Training on Data Mixtures
}

\author{%
  Dang Nguyen \\
   \And
  Wenhan Yang \\
   \And
  Rathul Anand \\
  \And
  Yu Yang \\
  \And
  Baharan Mirzasoleiman\\
  \And \vspace{-9mm}\\
  \{dangnth, hangeryang18, rathul, yuyang, baharan\}@cs.ucla.edu\\
  Computer Science Department, UCLA\\ 
}
\iclrfinalcopy

\begin{document}

\maketitle

\begin{abstract}
  Training with larger mini-batches improves the convergence rate and can yield superior performance. However, training with large mini-batches becomes prohibitive for Large Language Models (LLMs), %
due to the large GPU memory requirement. 
To address this problem, an effective approach is finding small mini-batch coresets that closely match the gradient of larger mini-batches.  %
However, this approach becomes infeasible and ineffective for LLMs, due to the highly imbalanced mixture of sources in language data, use of the Adam optimizer, and the very large gradient dimensionality of LLMs. 
In this work, we address the above challenges by proposing \textit{\name} (\alg). %
First, we show that mini-batch coresets found by gradient matching do not contain representative examples of the small sources w.h.p., and thus including all examples of the small sources in the mini-batch coresets is crucial for optimal performance. Second, we normalize the gradients by their historical exponential to find mini-batch coresets for training with Adam. Finally, we leverage zeroth-order methods to find smooth gradient of the last $V$-projection matrix and sparsify it %
to keep the dimensions with the largest normalized gradient magnitude. We apply \alg\ to fine-tuning Phi-2, Phi-3, Zephyr, and Llama-3 models with LoRA on MathInstruct {and SuperGLUE} benchmark. Remarkably, \alg\ reduces the memory requirement of fine-tuning by 2x and even outperforms training with 4x larger mini-batches. Moreover, \alg\ seamlessly integrates with existing memory-efficient training methods like LoRA, further reducing the memory requirements of training LLMs. Our code is available at \href{https://github.com/BigML-CS-UCLA/CoLM}{https://github.com/BigML-CS-UCLA/CoLM}.
\end{abstract}

\section{Introduction}
Large Language Models (LLMs) have achieved remarkable success in a variety of tasks, ranging from machine translation to conversational AI. However, pretraining and fine-tuning LLMs with billions of parameters requires a large amount of compute and GPU memory, not only to store the parameters but also to compute gradients and optimizer states (e.g., momentum and historical gradients in Adam).
For example, {full} finetuning a relatively small LLM, such as Phi-2 with 2.7B parameters, using a batch size of 128 requires at least 44 GB of GPU memory. The large memory requirement makes it prohibitive to train such models with larger batch sizes, which effectively improves the convergence and can improve performance. %
This raises a key question: \textit{can we train LLMs with smaller mini-batches and get the benefits of training with larger batch sizes?}

To address this problem, many memory-efficient techniques have been recently proposed, mainly to enable efficient fine-tuning of pretrained language models. At a high level, such methods aim to find a smaller set of parameters \citep{adelman2021faster}, or find low-rank \citep{hu2021lora,zhao2024galore} or quantized \citep{dettmers2022gpt3} weights or optimizer states to train the model in a memory-efficient manner. 
There have also been efforts to adapt gradient-free optimization for training LLMs \citep{malladi2023fine}. Yet, most memory-efficient techniques cannot achieve a comparable performance to %
training the full model parameters, or considerably increase the training time. \looseness=-1

In this work, we address the above problem from the \textit{data} perspective.
Specifically, we target finding smaller mini-batches of examples that simulate or outperform %
training with larger mini-batches. 
If this can be done, it directly improves the convergence rate of training or fine-tuning with mini-batch stochastic gradient methods, and can yield superior performance. %
To achieve this, an effective approach
is to find smaller mini-batches that closely capture the gradient of large random batches \citep{yang2023towards}. Smaller mini-batches (coresets) selected by this approach are medoids (centroids) of the data in gradient space, weighted by their corresponding cluster size \citep{mirzasoleiman2020coresets}.
Despite its promise on classification tasks, this approach is infeasible and ineffective for pre-training or fine-tuning LLMs, as we discuss below. %
Firstly, language data is often a mixture of highly imbalanced sources (e.g. categories or types of instructions). 
In this case, we show that smaller mini-batch coresets do not contain representative examples from the small sources, and obtain poor performance.
Secondly, Adam is the standard optimizer for training LLMs, and small mini-batches that capture the vanilla gradient of larger batches are not optimal for training with Adam. Finally, the very large dimensionality of the LLM gradients makes pairwise distances vacuous and the medoids cannot be found accurately.
These challenges make finding mini-batch coresets for training LLMs inherently much more challenging to address.

In this work,\! we propose \textit{\name}~\!(\alg) by making the following contributions:\!%
\vspace{-2mm}
\begin{itemize}[leftmargin=15pt]
    \item First, we show that w.h.p. \!mini-batch coresets only contain medoids of the big sources with a large-enough number of samples. Thus, examples selected via gradient matching from small sources are not representative and do not benefit learning other examples in their sources. Therefore, it is crucial to include \textit{all} examples of the small sources in the mini-batch coresets. Besides, to enhance learning small sources, we weight all examples in the mini-batch coresets uniformly.
    \item Next, to find mini-batch coresets for training with Adam, we normalize gradients by their historical exponential average, where the historical terms are only calculated for examples in the big sources. We find medoids of the big sources based on their normalized gradients.
    \item Finally, to enable finding medoids in the very high-dimensional gradient space, we use zeroth-order methods to find smooth gradient of the last $V$-projection matrix in a memory-efficient way, and sparsify it via a source-wise mask, %
    which keeps dimensions with the largest normalized gradient magnitude. We use $\ell_1$ distance between sparse gradients to find medoids of big sources.%
    \item We evaluate \alg\ on the challenging task of mathematical problem-solving, by fine-tuning Phi-2, Phi-3~\citep{li2023textbooks}, Zephyr \citep{tunstall2023zephyr}, and Llama-3 models~\citep{dubey2024llama} using LoRA on the MathInstruct dataset \citep{yue2023mammoth} containing 14 highly imbalanced sources. {Additionally, we apply \alg\ to datasets in SuperGLUE  benchmark \citep{wang2019superglue}, where we find sources by clustering the model's hidden states during the training.} Remarkably, \alg\ reduces the memory requirement of fine-tuning by 2x and even outperforms training with 4x larger random mini-batches. Compare to mini-batch of the same size, \alg\ outperforms by up to 7.1\% and 20\% on several in- and out-domain tasks. %
\end{itemize}
\vspace{-2mm}
Notably, our approach can be easily stacked with LoRA, and other memory-efficient training methods to further reduce the memory requirements of training LLMs, as we confirm in our experiments. 

\section{Related Work}
\textbf{Memory-efficient training of LLMs.} 
To address the large memory requirements of training LLMs, several methods have been recently proposed.
LoRA \citep{hu2021lora} freezes the pre-trained model weights and trains two low-rank adaptor weight matrices to adapt the weights of each layer. However, LoRA suffers from a performance drop compared to training with full-rank matrices. To improve upon this, several variations of LoRA \citep{liu2024dora, renduchintala2023tied, xia2024chain} have been proposed. %
Besides, GaLore \citep{zhao2024galore} proposed to reduce the memory cost of optimizer states %
by calculating the gradients and projecting them into a low-rank space. However, the above approaches also lead to increased computational costs.

Another line of methods approximate backpropagation by sparsifying gradients \citep{frantar2023sparsegpt}, subsampling the computational graph \citep{adelman2021faster}, gradient check-pointing \citep{chen2016training}, and quantization of weights and optimizer states \citep{dettmers2022gpt3}. 
However, these approaches can incur large approximation errors and cause performance drops. 
Zeroth-order gradient approximation has also been used for memory-efficient training \citep{malladi2023fine}. 
However, this approach cannot reach a comparable performance to normal training. \looseness=-1

Our method can be easily stacked with existing memory-efficient methods to improve convergence and further reduce memory requirements.

\textbf{Data selection for training LLMs.} Data selection for training LLMs has garnered significant attention due to its potential to enhance model performance while reducing computational costs. For pre-training, examples with middle perplexity rankings are shown beneficial \citep{marion2023less}. Clustering based on embeddings of a pretrained model and sampling from the clusters to drop redundancies has been also investigated \citep{tirumala2024d4}. %

For fine-tuning, training on manually crafted high-quality instruction/response pairs has shown highly effective \citep{zhou2023lima}. 
Building on this observation, data selection using LLMs such as ChatGPT or training on textbooks is proposed
\citep{eldan2023tinystories,textbooks2,chen2023alpagasus,li2023quantity}, and metrics such as diversity~\citep{bukharin2023data,du2023mods}, difficulty~\citep{bhatt2024experimental,marion2023less,zhou2023lobass}, and completion length~\citep{zhao2024long} are shown relevant. 
Given a high-quality validation set, using influence functions to select the most beneficial subsets of fine-tuning data has been also explored \citep{xia2024less}. However, a high-quality validation set is not always available (e.g. for MathInstruct). 
Existing methods select data in a one-shot manner before fine-tuning, %
and either require access to another open LLM or a large preprocessing time to fine-tune the original or a proxy LLM on the target data. %

We study data selection from a different perspective, i.e. by selecting small mini-batches that match the performance of %
training with larger mini-batches.
As baselines, we adapt several one-shot methods based on loss \citep{jiang2019accelerating}, gradient norm \citep{katharopoulos2018not}, middle perplexity \citep{marion2023less}, completion length \citep{zhao2024long}, confidence, and hidden-state centrality \citep{bhatt2024experimental} to iteratively {select} small mini-batches from larger random batches.\looseness=-1

\section{Preliminary: Matching Gradient of Large Batches}
Consider training a machine learning model %
on a dataset indexed by $V$, by minimizing the loss function $\LL(\thet)=\E_{i\in V}[\LL_i(\thet)]$. Mini-batch SGD with learning rate $\eta$ iteratively %
updates the model parameters as
$\thet_{t+1}=\thet_t - \eta~\g_{\M_t,t}$, where $\g_{\M_t,t}=\E_{i\in \M_t} [\g_{i,t}]$ is the gradient of a mini-batch $\M_t$ of random examples, and $\g_{i,t}=\nabla \LL_i(\thet_t)$.
As long as the mini-batch size $b=|\M_t|$ is not too large, the convergence rate of mini-batch SGD directly scales with a factor of $1/b$. Formally, for a non-convex $L$-gradient Lipschitz loss, mini-batch SGD with a small enough $\eta$ will visit an $\epsilon$-stationary point w.h.p. at least once in the following number of iterations \citep{ghadimi2013stochastic}:\looseness=-1
\begin{equation}\label{eq:convergence}
    \tilde{\mathcal{O}}\left(\frac{L(\LL(\thet_0)-\LL^*)}{\epsilon^2}\Big(1+\frac{\sigma^2}{b\epsilon^2}\Big)\right),
\end{equation}
where $\E_{i\in V}[(\g_{i,.} - \g_{V,.})^2]\leq \sigma^2$ is the variance of the individual gradients. 

To improve the convergence of mini-batch SGD with mini-batch $b$, one can iteratively find weighted subsets (mini-batch coresets) of size $b$ that closely match the gradients of large random batches $\M_t^L$ of size $r\!>\!b$ \citep{yang2023towards}. 
As the mini-batch coresets have a similar gradient to large batches, they %
have a smaller variance of $\sigma^2/r$. Thus, they improve the convergence of mini-batch SGD by $r/b$.
The mini-batch coresets found by gradient matching are medoids (centroids) of the larger batch in the gradient space, weighted by their corresponding cluster size, and can be found by maximizing 
a monotone submodular\footnote{A set function $F:2^V \rightarrow \mathbb{R}^+$ is \textit{submodular} if $F(e|S)=F(S\cup\{e\}) - F(S) \geq F(T\cup\{e\}) - F(T),$ for any $S\subseteq T \subseteq V$ and $e\in V\setminus T$. $F$ is monotone if for all $S \subset T$, $F(S) \leq F(T)$. 

} function via the greedy algorithm
\citep{mirzasoleiman2020coresets}: %
\begin{equation}\label{eq:facility}
    {S_t}^* \in \argmax_{S \subset \M^L_t, |S|\leq b} \sum_{i\in \M^L_t}\max_{s \in S} [C- \|\g_{i,t} - \g_{s,t} \| ], 
\end{equation}
where $C$ is a big constant. 
To find subsets efficiently, gradient of the loss w.r.t the input to the last layer of the model (which best captures the variation of gradient norm \citep{katharopoulos2018not}) is commonly used \citep{mirzasoleiman2020coresets,pooladzandi2022adaptive,yang2023towards}. \looseness=-1

\section{\alg: Mini-batch \name}

Despite its success of image classification tasks \citep{yang2023towards}, the above gradient matching formulation performs poorly for training LLMs, due to the following reasons: 
\begin{itemize}[leftmargin=20pt]
    \vspace{-2mm}
    \item \textbf{Highly Imbalanced Language Data.} 
    Language data often contain highly imbalanced sources (e.g. categories or types of instructions). For example, the ratio of the largest to smallest source size in the MathInstruct data is 300.
    Here, %
    subsets found by gradient matching do not contain representative examples (medoids) of the small sources %
    and 
    yield suboptimal performance. \looseness=-1
    \item \textbf{Adam optimizer.} 
    Adam \citep{kingma2014adam} is the commonly used optimizer for language tasks.
    Adam scales each gradient dimension to speed up learning along flatter dimensions and slow down learning along sharper ones. 
    Matching the vanilla gradient of large batches yields suboptimal performance when training with Adam.
    \item \textbf{Very Large Gradient Dimensionality.} 
    Finding medoids via Eq. \eqref{eq:facility} requires calculating pairwise distances between per-example gradients. But, in the very high-dimensional gradient space of LLMs, distances become vacuous. %
    Even the last layer is too high-dimensional to find medoids accurately. For instance, the dimensionality of the last $V$-projection matrix of Phi-2 is 6.5M when training the full parameters and 327K (matrix B) when using LoRA with rank 128. \looseness=-1
    \vspace{-2mm}
\end{itemize}
Next, we will discuss how we address each of the above challenges. %

\subsection{Dealing with the Imbalanced Language Data
}\label{subsec:imbalance_data}
\begin{figure}
    \centering
    \includegraphics[width=0.9\columnwidth]{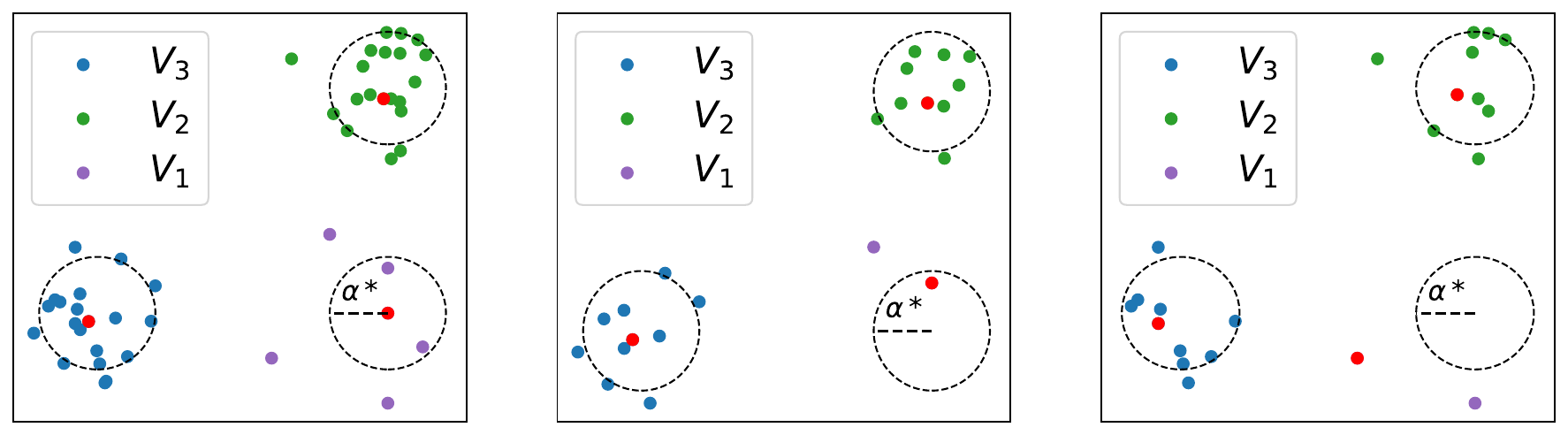}
        \caption{A toy imbalance data. (Left) Full data $V$ with %
        two big (blue, green) %
        and one small sources (purple). $k=3$ medoids of the data are shown in red. (Middle \& right) Two random samples of the data, with their corresponding $k=3$ medoids. %
        The $\alpha^\star$-neighborhoods of big sources are dense and thus medoids of random samples contain central examples of the big sources. However, the medoids of random sample do not necessarily contain central examples of the small source.
        }
        \label{fig:toy_data}
        \vspace{-3mm}
\end{figure}
First, we address the challenge of dealing with language data containing highly imbalanced mixture of sources.
Our key observation is that larger random batches contain many examples from the big sources. Thus, mini-batch coresets selected via gradient matching contain central examples of the big source. 
In this case, training on the coresets benefits learning other examples in the big sources. 
On the other hand, small sources have a few examples in the large batches. Hence, the mini-batch coresets do not contain central examples of the small sources. In this case, training on the coresets does not benefit learning other examples from small sources. This implies that one cannot reliably select representative examples of small sources from the larger batches. Indeed, for optimal performance, it is crucial to train \textit{on all examples of small sources} in the larger batches. 

Next, we formalize this problem. Consider a dataset $V$ containing $Q$ sources, i.e., $V=\{V_1 \cup \cdots \cup V_Q\}$. Suppose gradients of examples in source $V_q$ at iteration $t$ are drawn from an underlying infinite set, according to an unknown probability distribution. 
Let $A_q^t$ be the set of $k$ medoids of the \textit{infinite} set, such that around each $i\in A_q^t$ there is a neighborhood of radius at least $\alpha^*$, where the probability density is at least $\beta$ at all points, for some constants $\alpha^*$, $\beta$. 
This implies that the medoids are from reasonably dense and therefore representative regions of the gradient space.
Let us consider $g: \mathbb{R} \rightarrow \mathbb{R}$, %
to be the volume of a ball of radius $\alpha$ centered at a point in the metric space. 
The following theorem shows that for a large enough source that is randomly partitioned into $m$ parts, there are many examples from the dense neighborhoods in every partition. 

\begin{theorem}\label{thm:dense_areas}
    Let examples in $V_q$ be partitioned into $m$ parts. %
    A number of examples $|V_q| \geq \frac{2km \log(km / \delta)}{\beta g(\alpha)}$, where $\alpha \leq \alpha^\star$, is suffice to (1) have at least $km \log(km / \delta)$ elements in the $\alpha$-neighborhood of each $i \in A_q^t$ and (2) have each partition contain elements from all $k$ $\alpha$-neighborhoods with probability at least ($1-\delta$) for a small $\delta > 0$. 
\end{theorem}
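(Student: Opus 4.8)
\emph{Proof sketch.} The plan is a standard coreset-style concentration argument (balls into bins). The first step is to lower bound, for a fixed medoid $i\in A_q^t$, the probability $p_i$ that a single gradient drawn for source $V_q$ at iteration $t$ lies in the ball $B(i,\alpha)$. Since $\alpha\le\alpha^\star$, this ball sits inside the $\alpha^\star$-neighborhood of $i$ on which the density is at least $\beta$; integrating the density over $B(i,\alpha)$, whose volume is $g(\alpha)$, gives $p_i\ge\beta g(\alpha)=:p$. Writing $n=|V_q|$ and using that the $n$ gradients of $V_q$ are i.i.d.\ draws from the underlying distribution, the number $N_i$ of examples of $V_q$ landing in $B(i,\alpha)$ is binomial with $\E[N_i]=np_i\ge np$, and the hypothesis $n\ge \tfrac{2km\log(km/\delta)}{\beta g(\alpha)}$ is exactly what forces $np\ge 2km\log(km/\delta)$, i.e.\ every $\alpha$-neighborhood has expected occupancy at least $2km\log(km/\delta)$.

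For claim (1), I would apply the multiplicative Chernoff lower-tail bound, $\Pr[N_i<\tfrac12 np_i]\le\exp(-np_i/8)\le\exp(-\tfrac14 km\log(km/\delta))=(km/\delta)^{-km/4}$, and union bound over the $k$ medoids: for $km$ above a small absolute constant this total failure probability is well below $\delta/2$, so with probability $\ge 1-\delta/2$ every $i\in A_q^t$ has $N_i\ge\tfrac12 np_i\ge km\log(km/\delta)$, which is (1).

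For claim (2), condition on the event of (1), so each $B(i,\alpha)$ contains a set $S_i$ of at least $km\log(km/\delta)$ examples of $V_q$. Viewing the random partition of $V_q$ into $m$ parts as a uniformly random assignment of examples to parts, for a fixed neighborhood $i$ and part $j$ the probability that $S_i$ misses part $j$ is at most $(1-1/m)^{|S_i|}\le\exp(-|S_i|/m)\le\exp(-k\log(km/\delta))=(km/\delta)^{-k}\le\delta/(km)$. A union bound over the $km$ (neighborhood, part) pairs gives that, with probability $\ge 1-\delta/2$, every part contains at least one example from every $\alpha$-neighborhood; combining with (1) by a last union bound yields both conclusions with probability $\ge 1-\delta$.

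The routine pieces are the Chernoff and empty-cell estimates plus the bookkeeping of constants. The only real subtlety is in (2): the parts have (near-)equal fixed size rather than being i.i.d.\ assignments, so the ``$S_i$ misses part $j$'' event should strictly be handled by sampling without replacement, or, more cleanly, by observing that the miss-indicators for the elements of $S_i$ are negatively associated, so the product bound $(1-1/m)^{|S_i|}$ still applies. I expect this to be the main obstacle; the rest is standard.
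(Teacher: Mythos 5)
Your proposal is correct and follows essentially the same route as the paper's own proof (Lemmas~A.1 and A.3 in the appendix): a Chernoff lower-tail bound on the occupancy of each $\alpha$-neighborhood via the density lower bound $\beta g(\alpha)$ and a union bound over the $k$ medoids, then an empty-cell estimate of the form $(1-1/m)^{|S_i|}\le \delta/(km)$ with a union bound over the $km$ (neighborhood, part) pairs. The one point where you are actually more careful than the paper is the without-replacement subtlety in part (2), which the paper silently treats as independent probability-$1/m$ assignments; your negative-association (or direct without-replacement) fix closes that gap, and your remaining constant bookkeeping is no looser than the paper's.
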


Next, we show that the medoids of every partition are central examples from the dense neighborhoods of the (infinite) data, with a high probability. That is, they are in $\alpha$-neighborhood of $A_q^t$.
\begin{theorem}\label{thm:local_evaluation}
    Let $\delta, \epsilon > 0$ and let $n_q=|V_q|$ and $n_0$ be an integer such that for $n_q \geq n_0$ we have $\frac{n_q}{\ln(n_q)} \geq \frac{mk}{\epsilon^2}$. If $n_q \geq \max \left( n_0, \frac{m \log(2m/\delta)}{\epsilon^2}\right)$, with a probability of at least $1 - \delta$, medoids found by the greedy algorithm from every partition are in $\alpha$-neighborhoods of each $i\in A_q^t$.
\end{theorem}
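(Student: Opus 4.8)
The plan is to combine the structural guarantee of Theorem~\ref{thm:dense_areas} with a uniform concentration bound for the facility-location objective on each partition, and then show that the population objective is maximized --- with a definite additive margin --- only by center sets that place one center inside each dense $\alpha$-neighborhood of $A_q^t$.

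First I would set up the two versions of the objective. Let $F(S) = \E[\max_{s\in S}(C - \|\g - \g_s\|)]$ be the population facility-location value, where $\g$ is a gradient drawn from the underlying distribution of $V_q$ at iteration $t$, and let $\hat F_j(S) = \frac{1}{|P_j|}\sum_{i\in P_j}\max_{s\in S}(C - \|\g_{i,t} - \g_{s,t}\|)$ be its empirical counterpart on partition $P_j$; up to scaling, the greedy of Eq.~\eqref{eq:facility} maximizes $\hat F_j$. Since $C$ can be chosen so each term lies in a bounded range, Hoeffding's inequality gives, for a fixed $S$, $\Pr[\,|\hat F_j(S) - F(S)| > \epsilon\,] \le 2\exp(-c|P_j|\epsilon^2)$. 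A union bound over the $m$ partitions and over all $\le k$-element subsets $S$ of a partition (at most $(n_q/m)^k$ of them), together with $|P_j| \approx n_q/m$, makes the total failure probability at most $2m(n_q/m)^k \exp(-c(n_q/m)\epsilon^2)$, which is $\le \delta$ precisely when $c(n_q/m)\epsilon^2 \gtrsim k\ln(n_q/m) + \ln(2m/\delta)$. The first term on the right is exactly what forces $n_q/\ln n_q \ge mk/\epsilon^2$ (i.e.\ $n_q \ge n_0$) and the second is exactly what forces $n_q \ge m\log(2m/\delta)/\epsilon^2$. Hence on an event of probability $\ge 1-\delta$ we have $|\hat F_j(S) - F(S)| \le \epsilon$ simultaneously for every partition and every candidate center set of size $\le k$.

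Next I would analyze the population maximizer. Because each $i\in A_q^t$ carries a ball of radius $\alpha \le \alpha^\star$ on which the density is at least $\beta$, a center placed in the $\alpha$-neighborhood of $i$ attains near-maximal contribution on a region of mass at least $\beta\, g(\alpha)$; conversely, a set $S$ that misses some neighborhood, or places a center at a non-central (hence low-density, far-from-mass) point, pays a fixed additive penalty $\gamma = \gamma(\alpha,\beta,g(\alpha))$ relative to the configuration $A_q^t$. This yields $F(S) \le F(A_q^t) - \gamma$ for every size-$\le k$ set $S$ that does not have exactly one center in each $\alpha$-neighborhood. I would then take $\epsilon$ below a constant fraction of $\gamma$ (this is the implicit smallness-of-$\epsilon$ requirement absorbed into the structural constants), so the $\pm\epsilon$ slack from the concentration step cannot lift a ``bad'' configuration above a ``good'' one on any partition. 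To transfer this to the greedy output, I would use Theorem~\ref{thm:dense_areas}: each partition actually contains points in every $\alpha$-neighborhood, so good configurations are available; and since the dense neighborhoods are the regions of highest marginal coverage, an induction on the greedy step --- comparing the marginal gain of a point in a still-uncovered dense neighborhood (at least $\approx \beta g(\alpha)$ times the per-point value gap, minus $\epsilon$) against that of any point outside all neighborhoods --- forces each successive greedy pick into a fresh $\alpha$-neighborhood of $A_q^t$. Combining the three pieces gives the claim.

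The main obstacle I expect is the middle step: quantifying the additive margin $\gamma$ and, especially, arguing that the \emph{greedy} rather than the exact optimizer still lands in the neighborhoods. The $(1-1/e)$ guarantee is essentially vacuous here, since $C$ is large and all $F$-values are large and nearly equal, so the argument must be a direct per-step comparison of marginal gains; this in turn needs the dense neighborhoods to be not only dense but also sufficiently separated, so that one greedy pick ``uses up'' exactly one neighborhood and cannot spuriously cover a second. Making the dependence of $\gamma$ on $\alpha$, $\beta$, $g(\alpha)$ explicit, and checking that the required separation is consistent with the $\alpha^\star$-neighborhood hypothesis, is where the real work lies.
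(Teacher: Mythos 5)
Your concentration step is essentially the paper's entire proof: the paper (Lemma~\ref{lem:local_evaluation}) fixes a subset $S$ with $|S|\le k$, applies Hoeffding without replacement to the decomposable objective $f_{\M_j}(S)=\frac{1}{|\M_j|}\sum_{i\in\M_j}f_i(S)$ to get a per-set deviation bound $2\exp(-2n\epsilon^2/m)$, union-bounds over at most $n^k$ candidate sets and over the $m$ partitions, and extracts exactly your two conditions (the $n^k$ term forces $n/\ln n \ge mk/\epsilon^2$, i.e.\ $n\ge n_0$, and the $2m$ term forces $n \ge m\log(2m/\delta)/\epsilon^2$). The step you flag as the real difficulty --- converting uniform closeness of the empirical and population facility-location values into the statement that the \emph{greedy} medoids of every partition lie in the $\alpha$-neighborhoods of $A_q^t$ --- is not actually carried out in the paper: after the lemma it is dispatched in a single sentence asserting that since $|f_{V_i}(S)-f(S)|<\epsilon$ holds for all small $S$, the distribution of $V_i$ resembles that of $V$ and hence the $k$-medoids of $V_i$ are near those of $V$. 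Your sketched margin argument (a population gap $\gamma$ for center sets missing a dense neighborhood, choosing $\epsilon$ below a fraction of $\gamma$, and a per-step marginal-gain comparison for the greedy, with Theorem~\ref{thm:dense_areas} guaranteeing that each partition contains candidates in every neighborhood) is therefore more detailed than the paper's treatment, and the open items you honestly list --- quantifying $\gamma$, requiring separation of the neighborhoods, and handling greedy rather than exact maximization --- are genuine gaps, but they are gaps in the paper's own argument as well rather than defects introduced by your route.
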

Note that in every large random batch, there is a part of every source $V_q$, with expected number of examples $|V_q|/|V|$. 
Hence, the above theorems imply that for small sources without a large-enough sample size, coresets found by gradient matching do not necessarily contain their central examples. Hence, training on them yields a poor performance on small sources. 
Fig \ref{fig:toy_data} shows an illustration.

\textbf{Coresets for Imbalanced Data.} Consider a data $V=\{V_1 \cup \cdots \cup V_{p} \cup V_{p+1} \cup \cdots V_Q\}$, with $p$ small and $Q-p$ large sources. For a dataset with $c$ sources, we regard \textbf{small sources} as those with less than $|V|/c$ examples.
To learn the small sources, 
we include all of their examples from the large batch in the small mini-batch coreset. %
That is, $S_s^t = \{v\in \M_t^L | v \in \cup_{i\in [p]} V_i\}$. 
But, for every big source $V_q$ where $ q\in \{p+1,\cdots, Q\}$, we apply the greedy algorithm to its examples in the larger batch $V_q^t = \{v\in \M_t^L | v \in V_q\}$ and add its medoids $S_q^t\in V_q^t$ to the small mini-batch coreset, where $b_q=|S_q^t|$ is proportional to the number of examples from $V_q$ in $\M_t^L$, i.e., $b_q=(b-|S_s^t|).|V_q^t|/(|\M_t^L|- |S_s^t|)$. The mini-batch coreset at step $t$ is $S_t=\{S_s^t \cup\ S_{p+1}^t \cup \cdots \cup S_{Q}^t \}$.
We note that sources are mostly separable based on their gradients. Thus, one subset can be found from all examples of big sources in the larger batch. However, selecting subsets separately yields a slightly better performance, as we show in our experiments.
Finally, to ensure learning various big and small groups at a more uniform speed, we assign uniform weights to all the selected examples. %
For datasets such as MathInstruct, the sources are labeled in the training data. {For datasets without specified sources, we cluster the hidden state of the model to find sources during fine-tuning.} 

The following theorem shows that the small mini-batch coresets have a smaller variance compared to random mini-batches of the same size. Therefore, they guarantee superior convergence rate.
\begin{theorem}[Variance reduction]\label{thm:variance_reduction}
    Let the number of outliers which do not belong to any $k$ dense areas be $\kappa$. Let $\alpha_u > \alpha^\star$ be the largest distance from an outlier to any centroids. Assume that all the selected samples $S_q^t$ belong to the dense areas. The variance of the mini-batch coresets of size $b$ is smaller than the variance of the random subset of size $b$ by up to $\frac{\kappa}{m} (\alpha_u - \alpha^\star)(2 \alpha^\star + \frac{\kappa}{m}(\alpha_u - \alpha^\star))$.
\end{theorem}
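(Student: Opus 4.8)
The plan is to compute the variance of the mini-batch coreset directly and compare it term-by-term to the variance of a random mini-batch of the same size, exploiting the fact that the coreset replaces the $\kappa/m$ expected outliers per partition with medoids that lie inside the dense $\alpha^\star$-neighborhoods. First I would set up notation: write $\g_{S_t} = \E_{i\in S_t}[\g_{i,t}]$ for the (uniformly weighted) coreset gradient and recall that by Theorem~\ref{thm:local_evaluation} every selected medoid $s\in S_q^t$ lies within distance $\alpha^\star$ of some true centroid $i\in A_q^t$, while a uniformly random subset of size $b$ contains in expectation $\kappa/m$ examples that are outliers, i.e.\ at distance up to $\alpha_u$ from every centroid. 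The variance is $\E_{i}[\|\g_{i,t}-\g_{\cdot,t}\|^2]$ in each case, so the difference is governed entirely by how far the contributing samples sit from the cluster centers in gradient space.

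Next I would bound the per-example squared-deviation contributed by an outlier versus by an in-cluster medoid. For a sample at distance $\alpha_u$ from the nearest centroid versus one at distance $\le \alpha^\star$, the triangle inequality gives that the squared distance to the batch mean differs by at most $(\alpha_u-\alpha^\star)(2\alpha^\star + (\alpha_u-\alpha^\star))$ — this is just the algebraic identity $\alpha_u^2 - (\alpha^\star)^2 = (\alpha_u-\alpha^\star)(\alpha_u+\alpha^\star)$ rewritten by adding and subtracting, together with the fact that all the relevant gradients (medoids, centroids, and the batch mean, which is itself a convex combination) stay within an $\alpha^\star$-radius region once the $\kappa$ outliers are excluded. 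Since each partition contributes $\kappa/m$ such outliers in expectation, summing over the $m$-fold partitioning structure and taking expectation over the random batch yields the claimed aggregate gap of $\frac{\kappa}{m}(\alpha_u-\alpha^\star)\bigl(2\alpha^\star + \frac{\kappa}{m}(\alpha_u-\alpha^\star)\bigr)$, where the quadratic $\frac{\kappa}{m}$ term arises from the outliers' mutual contribution to shifting the batch mean.

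The main obstacle I anticipate is making the mean-shift bookkeeping rigorous: replacing $\kappa/m$ outliers by medoids moves the empirical batch mean $\g_{S_t}$, and this shift feeds back into \emph{every} term of the variance sum, not just the outlier terms. I would handle this by fixing the decomposition $\g_i - \g_{S_t} = (\g_i - \text{(nearest centroid)}) + (\text{nearest centroid} - \g_{S_t})$, bounding the first bracket by $\alpha^\star$ for in-cluster points and by $\alpha_u$ for outliers, and bounding the displacement of $\g_{S_t}$ from the ``clean'' mean by $(\kappa/m)(\alpha_u - \alpha^\star)$ since only the outlier mass moves and it moves by at most $\alpha_u-\alpha^\star$. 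Expanding the square and collecting the linear and quadratic contributions of this displacement term is precisely what produces the two summands $2\alpha^\star$ and $\frac{\kappa}{m}(\alpha_u-\alpha^\star)$ inside the parentheses. The remaining steps — that the coreset's in-cluster structure is guaranteed by the assumption ``all selected samples $S_q^t$ belong to the dense areas'' plus Theorems~\ref{thm:dense_areas} and~\ref{thm:local_evaluation}, and that uniform weighting does not inflate variance relative to the random baseline — are routine and I would state them briefly.
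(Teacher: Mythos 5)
Your proposal measures the wrong quantity and, as a result, cannot produce the stated constant. The paper's proof (Lemma~\ref{lem:variance_bound}) compares the variance of the subset's \emph{centroid}, i.e., the mini-batch mean gradient viewed as an estimate of the centroid of $A$: writing $S^c = A^c + \epsilon_j$, it uses $\mathrm{Var}(S^c)\le \E[\epsilon_j^2]$, bounds the centroid deviation by $\epsilon_j\le\alpha^\star$ for the coreset (all selected samples lie in dense areas) and by $\epsilon_j\le(1-\tfrac{\kappa}{m})\alpha^\star+\tfrac{\kappa}{m}\alpha_u=\alpha^\star+\tfrac{\kappa}{m}(\alpha_u-\alpha^\star)$ for a random subset (which contains $\kappa/m$ outliers on average), and then differences the two squared bounds; the identity $(\alpha^\star+x)^2-(\alpha^\star)^2=x(2\alpha^\star+x)$ with $x=\tfrac{\kappa}{m}(\alpha_u-\alpha^\star)$ is exactly what puts the factor $\tfrac{\kappa}{m}(\alpha_u-\alpha^\star)$ \emph{inside} the parentheses. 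You instead work with the per-example variance $\E_{i}[\|\g_{i,t}-\g_{S,t}\|^2]$ around the batch mean and bound each sample's deviation by $\alpha^\star$ or $\alpha_u$.

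Two concrete gaps follow. First, your supporting claim that ``medoids, centroids, and the batch mean all stay within an $\alpha^\star$-radius region once the outliers are excluded'' is false when there are $k>1$ dense areas: the centroids of distinct dense areas are generally far apart, so an in-cluster point's distance to the batch mean is governed by inter-cluster spread rather than $\alpha^\star$, and your per-example decomposition does not close without additional (uncontrolled) cross terms that appear in both the random batch and the coreset. Second, even granting the bookkeeping, averaging squared per-example gaps of size $\alpha_u^2-(\alpha^\star)^2$ over an outlier fraction $\kappa/m$ yields $\tfrac{\kappa}{m}(\alpha_u-\alpha^\star)\bigl(2\alpha^\star+(\alpha_u-\alpha^\star)\bigr)$, which is strictly larger than the claimed reduction; the smaller quadratic term $\bigl(\tfrac{\kappa}{m}\bigr)^2(\alpha_u-\alpha^\star)^2$ only arises if you first average the displacement (the mean shifts by at most $\tfrac{\kappa}{m}(\alpha_u-\alpha^\star)$) and then square, i.e., if you argue at the level of the subset mean as the paper does. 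You do invoke the mean-shift bound $\tfrac{\kappa}{m}(\alpha_u-\alpha^\star)$, but in your expansion it sits next to per-outlier terms of order $\alpha_u-\alpha^\star$, so the quadratic contribution you would actually collect is per-outlier, not $\bigl(\tfrac{\kappa}{m}\bigr)^2$ overall. To recover the theorem as stated, drop the per-example variance and compare the deviations of the subset means (centroids) directly, as in the paper.
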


\subsection{Finding Coresets for Training with Adam Optimizer}\label{subsec:adam_optimizer} 
Next, we address finding mini-batch coresets for training with Adam optimizer. 
Adam adapts the learning rate across dimensions by %
scaling the gradient updates by square roots of exponential moving averages of squared past gradients. In doing so, it reduces the learning rate across sharp dimensions and increases the learning rate across flatter dimensions to improve convergence. 
Formally,
\begin{align}\label{eq:adam}
    \m_t=\frac{\beta_1 \m_{t-1} + (1-\beta_1) \g_{t}}{1-\beta_1^t}, \quad \vv_t=\frac{\beta_2 \vv_{t-1} + (1-\beta_2) \g_{t}^2}{1-\beta_2^t}, \quad \thet_t=\thet_{t-1}-\eta \frac{\m_t}{\epsilon+\sqrt{\vv_t}}.
\end{align}
For selecting mini-batch coresets for training with Adam, matching the vanilla gradient is not enough. 
To do so, we normalize every gradient dimension by the exponential average of its historical values. %
Additionally, as we only select medoids of big sources, we calculate the historical terms $\m, \vv$ only based on the big groups' gradients, which we denote by $\hat{\m}, \hat{\vv}$. This allows a more precise selection of the subsets, as we also confirm in our experiments.
Formally, we select the medoids of the normalized gradients of big sources, by solving the following submodular facility location function: 
\begin{equation}\label{eq:facility_adam}
    {S_t^q}^* \in \argmax_{S \subset V_q^t, |S|\leq b_q} \sum_{i\in V_q^t}\max_{s \in S} [C- \|\frac{\hat{\m}_{t,i}}{\epsilon+\sqrt{\hat{\vv}_{i,t}}} - \frac{\hat{\m}_{t,s}}{\epsilon+\sqrt{\hat{\vv}_{s,t}}} \| ].
\end{equation}

The very high dimensional gradient of LLMs makes solving Eq. \eqref{eq:facility_adam} prohibitively expensive. Besides, in such a high dimensional space, pair-wise distances become vacuous. Next, we discuss lowering the gradient dimensionality to find medoids of big sources more accurately.

\subsection{Finding Lower-dimensional Gradient Estimates}\label{subsec:low_dim_grad}
The very high-dimensional gradients of LLMs are very noisy. 
To find smoother lower dimensional gradients in a memory efficient manner, we use zeroth-order methods to calculate the gradient of the last $V$-projection matrix, and then sparsify it to lower its dimensionality. 
The $V$-projections matrix allows finding higher-quality subsets, as we will confirm in our experiments.
Notably, this approach stacks well with memory-efficient training methods such as LoRA, as we will confirm empirically.
Simultaneous Perturbation Stochastic Approximation (SPSA) \citep{spall1992multivariate} is a zeroth-order technique that estimates the gradient as:
\begin{equation}\label{eq:spsa}
    \hat{\g}=\frac{\LL(\thet+\epsilon \z) - \LL(\thet-\epsilon \z)}{2\epsilon} \z \approx \z\z^T  \g,
\end{equation}
where $\z \in \mathbb{R}^d$ is a random vector with $\z \sim \N(0, \pmb{I}_d)$, and $d$ is the number of model parameters and $\epsilon$ is the perturbation scale. 
As $\epsilon \rightarrow 0$, the SPSA estimate provides a rank-1 reconstruction of the gradient, and this is smoother than the actual gradient calculated with backpropation. SPSA requires two forward passes through the model to compute the gradient estimate.

\textbf{Estimating the Last $V$-Projection Gradient.} 
To get the gradient of the last $V$-projection matrix for example $i$, instead of perturbing all the parameters, we sample random perturbations for parameters corresponding to the last (LoRA) $V$-projection in the perturbation vector $\z$, and use zero for the other entries:
\begin{equation}\label{eq:mezo_last}
    \hat{\g}_{i,t}^{vp}=\frac{\LL_i(\thet_t+\epsilon \z_{vp}) - \LL_i(\thet_t-\epsilon \z_{vp})}{2\epsilon}\z_{vp},
\end{equation}
where $\z_{vp} \in \mathbb{R}^{d_{vp}}$ with $\z = [\pmb{0}_{d-d_{vp}}, \z_{vp}]$ and %
$\z_{vp} \sim \N(0, \pmb{I}_{d_{vp}})$, and $d_{vp}$ is the dimensionality of the flattened last $V$-projection matrix. Eq. \eqref{eq:mezo_last} can be calculated very efficiently in just \textit{one} forward pass. To do so, we first make a forward pass to get the activations $\X_{L-1}$ of the penultimate layer of the model. Then, we perturb the last-layer parameters twice to calculate $\hat{\g}_{i,t}^{vp}$ based on the pre-calculated $\X_{L-1}$. The time of getting the lower dimensional last-layer gradients will be dominated by the time of computing $\X_{L-1}$, and the cost of the second step is negligible. %
To minimize the memory requirement, one can follow \citep{malladi2023fine} to use a fix seed to generate the same perturbation $\z_{vp}$ multiple times. Hence, the memory overhead is also negligible. 

We use the zeroth-order gradient estimates in Eq. \eqref{eq:mezo_last} to calculate normalized gradients $\hat{\m}_t, \hat{\vv}_t$ for the big sources. Nevertheless, these gradients are still too high-dimensional to find medoids accurately.

\textbf{Sparsifying the Last-layer Normalized Gradient Estimates for Adam. } 
To further reduce the gradient dimensionality, we sparsify the normalized $V$-projection gradients. The subsets are selected for each big source separately. 
Thus, for every big source $q$, we find dimensions that best preserve the normalized gradient norm of its examples $V_q^t \subseteq \M_t^L$ in the larger random batch. The normalized gradient norm
to the first order indicates how much each gradient update achieves a loss reduction: 
\begin{equation}
    \Delta\LL(\thet)=\lim_{\epsilon\rightarrow 0}\frac{\LL(\thet+\epsilon~ \m/(\epsilon+\sqrt{\vv_t}))-\LL(\thet)}{\epsilon}=(\frac{\m}{\epsilon+\sqrt{\vv}})^T \frac{\m}{\epsilon+\sqrt{\vv}}.
\end{equation}

Dimensions that best preserve the normalized gradient norm are those with the largest magnitude. %
Therefore, for every big source, we sparsify the normalized zeroth-order gradient $\hat{\m}_t/(\epsilon+\sqrt{\hat{\vv}_t})$ of the last (LoRA) $V$-projection %
by a mask vector $M_q^t$, which has 1 %
for the top $h$ parameters with the largest magnitudes and 0 elsewhere. 

\textbf{Using $\ell_1$ distance in high dimensions.} We calculate the pair-wise normalized gradient dissimilarity between sparsified gradients using 
$\ell_1$ distance, %
which is %
preferable to Euclidean distance in high dimensions \citep{aggarwal2001surprising}. 
The medoids for each big source are found by solving:\looseness=-1
\begin{equation}\label{eq:facility_sparse}
    {S_q^t}^* \in \argmax_{S \subset V_q^t, |S|\leq k_i} \sum_{i\in V_q^t}\max_{s \in S} [C- \|\frac{\hat{\m}_{t,i}}{\epsilon+\sqrt{\hat{\vv}_{t,i}}}\odot M_q^t - \frac{\hat{\m}_{t,s}}{\epsilon+\sqrt{\hat{\vv}_{t,s}}}\odot M_q^t \|_1 ].
\end{equation}
In our experiments, we show that selecting as small as $0.7\%$ of the (LoRA) last layer gradient dimensions enables finding high-quality subsets efficiently. Additionally, we show that compared to random projection \citep{johnson1984extensions}, this approach enables finding higher-quality subsets and is orders of magnitude faster in practice. 

The pseudo-code of \alg\ is illustrated in Alg. \ref{alg:colm} in Appendix~\ref{apx:pseudo_code}.

\vspace{-4mm}
\section{Experiments}
\vspace{-3mm}
In this section, we evaluate the performance of \alg, for fine-tuning LLMs, by comparing the performance, memory requirement, and wall-clock training time of training with small and large random mini-batches, with that of our method. In addition, we conduct an ablation study showing the effectiveness of different design choices of \alg. We further demonstrate the effectiveness of \alg~in the pre-training setting in Appendix~\ref{apx:pretrain}.

\begin{table}[t]
    \caption{Accuracies ($\uparrow$) on in-domain and out-of-domain datasets when fine-tuning Phi-2 with LoRA on the MathInstruct for 1K iterations. One-shot selection techniques (CL, GN, LC, FL, MP) are adapted to select small mini-batches on the fly. \alg~with batch size (bs $=$ 64) outperforms all the baselines. Notably, \alg~ even outperforms fine-tuning with bs $=$ 256, while using {45\%} less memory, and achieves similar performance to fine-tuning for 2K iterations with bs = 128. 
    }
    \label{tab:main_math_results}
    \vspace{-3mm}
    \begin{center}
    \begin{small}
    \scalebox{0.8}{
    \begin{tabular}{p{2.03cm}|ccc|c|ccc|c|c}
        \toprule
        Method & \multicolumn{4}{c}{In-domain} & \multicolumn{4}{c}{Out-domain} & Avg All \\
        & GSM8K & MATH & NumGLUE & Avg & SVAMP & Math. & SimulEq & Avg \\
        \midrule
        Pretrained & 52.9 & 16.4 & 35.0 & 34.8 & 67.9 & 31.9 & {28.8} & 42.9 & 38.8 \\
        \midrule
        FT (bs=64) & $66.5_{\pm 0.8}$ & $28.4_{\pm 0.3}$ & $50.2_{\pm 0.9}$ & $48.3_{\pm  0.2}$ & $79.2_{\pm 0.4}$ & $52.4_{\pm 0.8}$ & $24.1_{\pm 1.5}$ & $51.9_{\pm 0.2}$ & $50.1_{\pm 0.2}$ \\
        CL & $56.1_{\pm 3.2}$ & $26.4_{\pm 0.5}$ & $32.9_{\pm 3.2}$ & $38.5_{\pm 2.2}$ & $33.3_{\pm 4.9}$ & $46.9_{\pm 4.9}$ & $11.9_{\pm 3.2}$ & $30.7_{\pm 2.8}$ & $34.6_{\pm 2.5}$\\
        BL & $58.0_{\pm 0.5}$ & $21.1_{\pm 0.5}$ & $43.7_{\pm 2.0}$ & $40.9_{\pm 0.8}$ & $77.1_{\pm 0.7}$ & $38.0_{\pm 4.4}$ & $18.4_{\pm 0.6}$ & $44.5_{\pm 1.3}$ & $42.7_{\pm 1.0}$ \\
        GN & $65.0_{\pm 1.2}$ & $24.9_{\pm 1.0}$ & $45.5_{\pm 1.2}$ & $45.1_{\pm 1.1}$ & $76.7_{\pm 1.3}$ & $42.9_{\pm 2.7}$ & $16.8_{\pm 2.3}$ & $45.5_{\pm 2.0}$ & $45.3_{\pm 1.6}$ \\
        LC & $59.3_{\pm 0.9}$ & $24.0_{\pm 0.7}$ & $48.0_{\pm 0.5}$ & $43.8_{\pm 0.4}$ & $79.5_{\pm 0.8}$ & $45.9_{\pm 0.3}$ & $23.6_{\pm 2.6}$ & $49.7_{\pm 1.1}$ & $46.7_{\pm 0.7}$ \\
        FL & $68.0_{\pm 1.1}$ & $29.2_{\pm 0.3}$ & $51.4_{\pm 1.3}$ & $49.5_{\pm 0.7}$ & $\mathbf{80.4_{\pm 0.3}}$ & $55.6_{\pm 1.2}$ & $30.5_{\pm 2.5}$ & $55.5_{\pm 1.3}$ & $52.5_{\pm 1.0}$ \\
        MP & $65.3_{\pm 0.2}$ & $28.4_{\pm 0.2}$ & $54.6_{\pm 1.6}$ & $49.4_{\pm 0.5}$ & $79.8_{\pm 0.9}$ & $53.6_{\pm 1.0}$ & $36.6_{\pm 3.0}$ & $56.7_{\pm 0.9}$ & $53.0_{\pm 0.7}$ \\
        \textbf{\alg~(Ours)} & $\mathbf{68.4_{\pm 0.3}}$ & $\mathbf{29.8_{\pm 0.4}}$ & $\mathbf{57.3_{\pm 0.4}}$ & $\mathbf{51.9_{\pm 0.3}}$ & $80.2_{\pm 1.0}$ & $\mathbf{59.8_{\pm 1.1}}$ & $\mathbf{44.1_{\pm 2.8}}$ & $\mathbf{61.4_{\pm 1.6}}$ & $\mathbf{56.6_{\pm 0.9}}$ \\
        \midrule
        FT (bs=128) & $67.4_{\pm 0.5}$ & $28.8_{\pm 0.3}$ & $53.2_{\pm 1.2}$ & $49.8_{\pm 0.5}$ & $80.4_{\pm 1.3}$ & $55.6_{\pm 0.4}$ & $29.9_{\pm 2.4}$ & $55.3_{\pm 1.0}$ & $52.6_{\pm 0.6}$ \\
        FT (bs=256) & $67.5_{\pm 0.1}$ & $29.6_{\pm 0.2}$ & $58.3_{\pm 1.2}$ & $51.8_{\pm 0.4}$ & $79.8_{\pm 1.1}$ & $56.3_{\pm 0.7}$ & $40.5_{\pm 2.1}$ & $58.9_{\pm 1.2}$ & $55.3_{\pm 0.5}$ \\\midrule
        FT (bs=128) 2K& $67.7_{\pm 0.8}$ & $30.3_{\pm 0.4}$ & $58.4_{\pm 0.8}$ & $52.1_{\pm 0.3}$ & $79.5_{\pm 0.4}$ & $57.9_{\pm 0.5}$ & $45.5_{\pm 0.7}$ & $60.9_{\pm 0.4}$ & $56.5_{\pm 0.3}$ \\
        \bottomrule
    \end{tabular}
    }
    \end{small}
    \end{center}
    \vspace{-5mm}
\end{table}

\vspace{-2mm}
\subsection{Settings}
\vspace{-2mm}
\textbf{Training datasets.} We use the MathInstruct~\citep{yue2023mammoth} dataset which consists of about 260K instruction tuning examples, curated from 14 highly imbalanced open-source math datasets, with broad coverage of mathematical fields and a wide range of difficulty levels. The ratio of the largest to smallest source in MathInstruct is almost 300, and the distribution of sources can be found in Fig \ref{fig:mathinstruct_data_source} in the Appendix. For datasets without specific sources, we use three datasets from the SuperGLUE benchmark \citep{wang2019superglue} for the classification task: SST-2, CB, and MultiRC. 

\textbf{Models.} %
We utilize the Phi-2, Phi-3~\citep{li2023textbooks}, Zephyr \citep{tunstall2023zephyr}, and Llama-3 models~\citep{dubey2024llama}.

\textbf{Training details.} 
We use LoRA with a rank of 128, alpha of 512, and dropout rate of 0.05. For Phi models, we apply LoRA to all attention matrices (i.e. QKV\_proj) and two fully connected layers while for Zephyr, we apply LoRA to all attention matrices (i.e. QKVO\_proj). All experiments are run on 4 NVIDIA A40 GPUs. We repeat each experiment three times. \looseness=-1

\textbf{Baselines.} We compare \alg~with normal fine-tuning (FT) using small and large batch sizes and an online selection method called Big Loss (BL)~\citep{jiang2019accelerating}. 
In addition, we adapt one-shot selection techniques, including Grad Norm (GN)~\citep{katharopoulos2018not}, Middle Perplexity (MP)~\citep{marion2023less}, Completion Length (CL)~\citep{zhao2024long}, Least Confidence (LC), and selecting centroids of the model's hidden state by maximizing submodular facility location (FL)~\citep{bhatt2024experimental}, to select small mini-batches from larger ones during fine-tuning.

\textbf{Evaluation datasets.} %
Following \citep{yue2023mammoth}, we use a variety of popular datasets across both in-domain and out-of-domain datasets. The in-domain datasets include GSM8K \citep{cobbe2021gsm8k}, MATH \citep{hendrycks2021measuring}, and NumGLUE \citep{mishra-etal-2022-numglue}. For the out-of-domain datasets, we include SVAMP \citep{patel-etal-2021-nlp}, Mathematics \citep{davies2021advancing}, and SimulEq \citep{koncel-kedziorski-etal-2016-mawps}.

Additional training details and evaluation metrics are specified in Appendix \ref{apx:finetuning_settings}.

\begin{figure*}[t!]
    \centering
    \begin{subfigure}{0.33\textwidth}
        \centering
        \includegraphics[width=1.03\columnwidth]{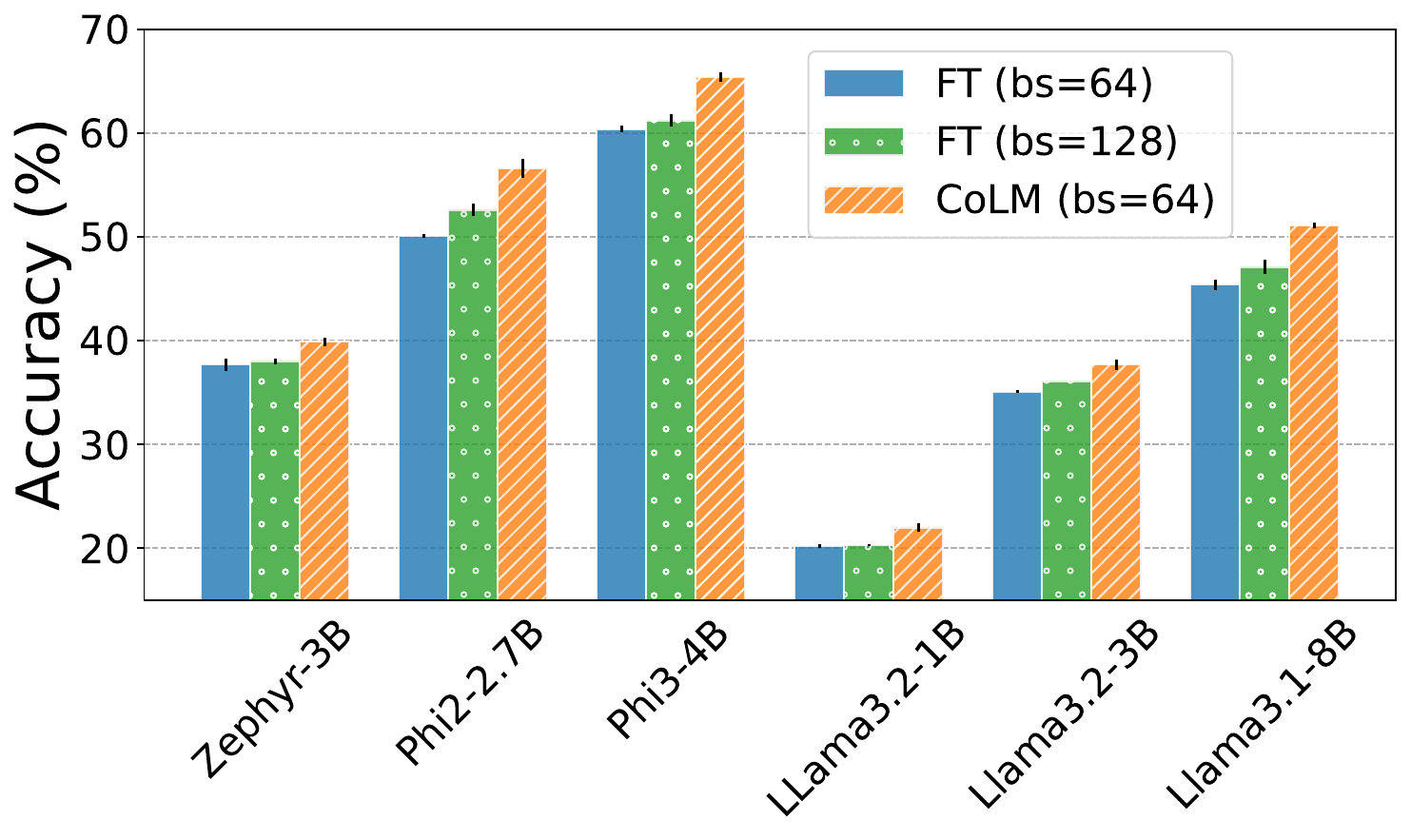}
        \caption{Different LLMs}
        \label{fig:math_model}        
    \end{subfigure}
    \begin{subfigure}{0.33\textwidth}
        \centering
        \includegraphics[width=0.95\columnwidth]{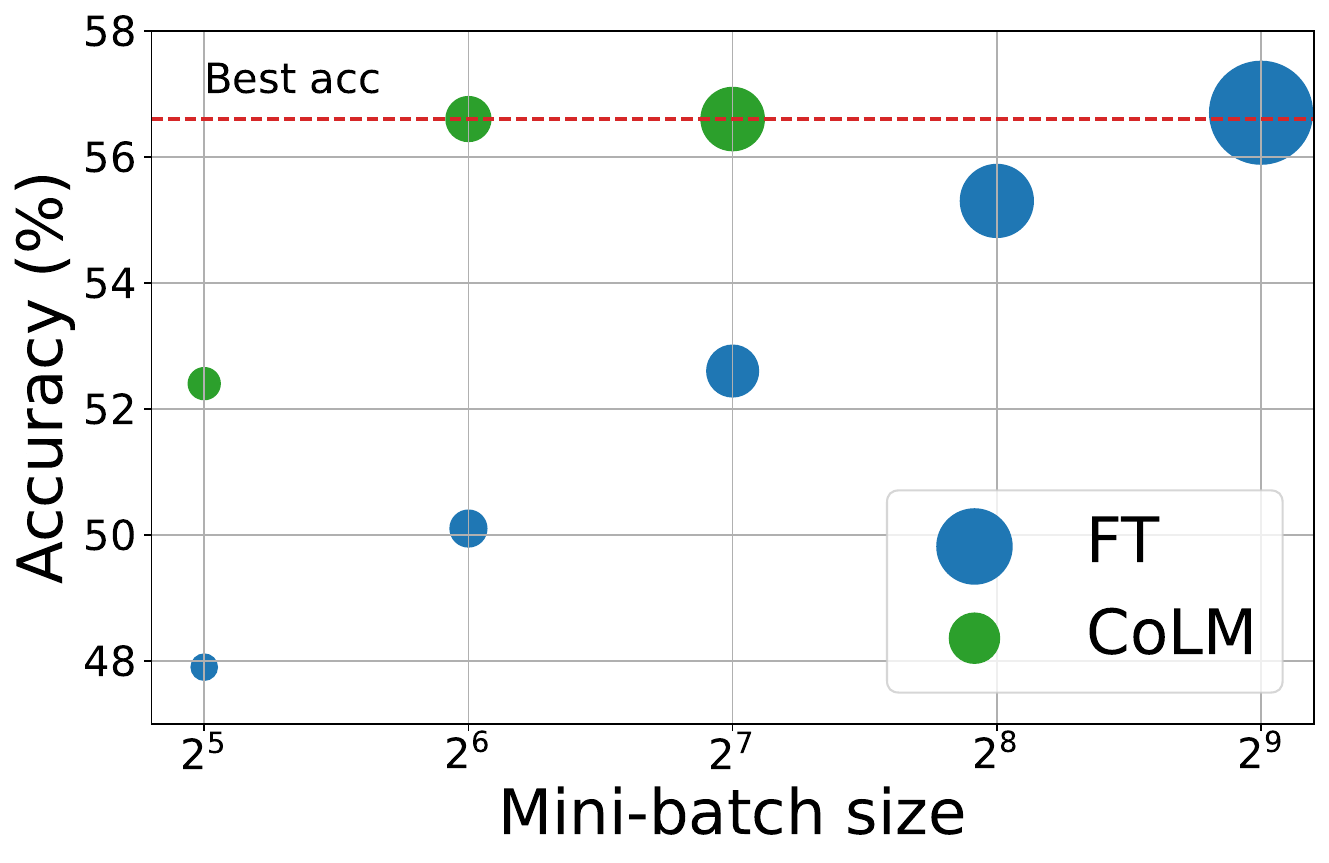}
        \caption{{Changing mini-batch size}}
        \label{fig:change_mini_batch_size}
    \end{subfigure}
    \begin{subfigure}{0.32\textwidth}
        \centering
        \includegraphics[width=\columnwidth]{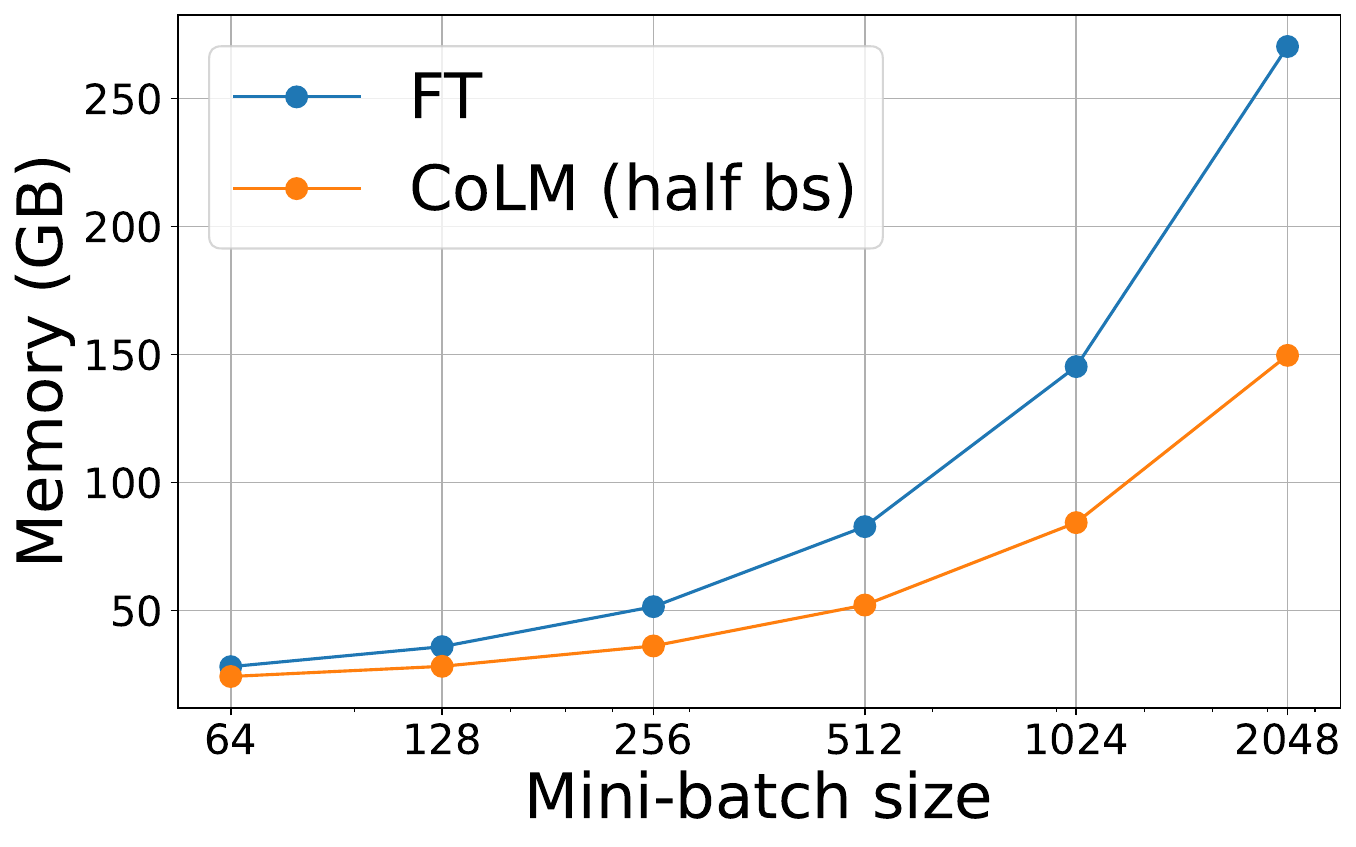}
        \caption{{Memory usage vs batch size}}
        \label{fig:memory_cost}
    \end{subfigure}
    \hfill
    \vspace{-2mm}
    \caption{(a) \alg\ with bs = 64 (from 128) outperforms fine-tuning different models with bs = 64 and bs = 128 by a large margin; (b) \alg\ improves the performance of training with different batch sizes. The size of each circle is proportional to the training time of the corresponding method. (c) \alg\ reduces memory consumption, with reduction increasing as the batch size grows.}
    \label{fig:convergence_time_accuracy}
    \vspace{-2mm}
\end{figure*} 
\begin{figure*}[t!]
    \centering
    \begin{subfigure}{0.33\textwidth}
        \centering
        \includegraphics[width=\columnwidth]{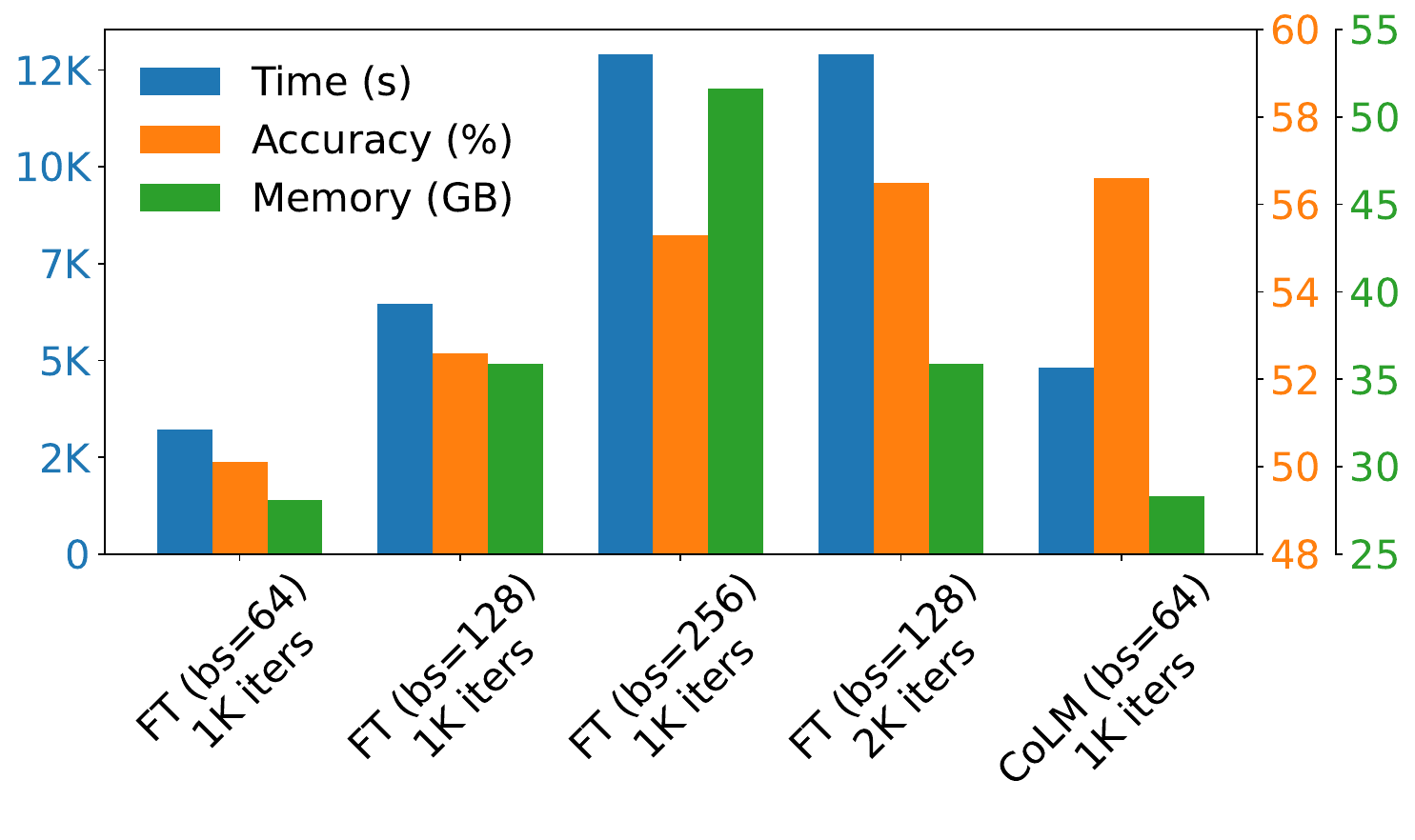}
        \caption{Time vs Accuracy vs Memory}
        \label{fig:speedup}
    \end{subfigure}
    \begin{subfigure}{0.33\textwidth}
        \centering
        \includegraphics[width=0.95\columnwidth]{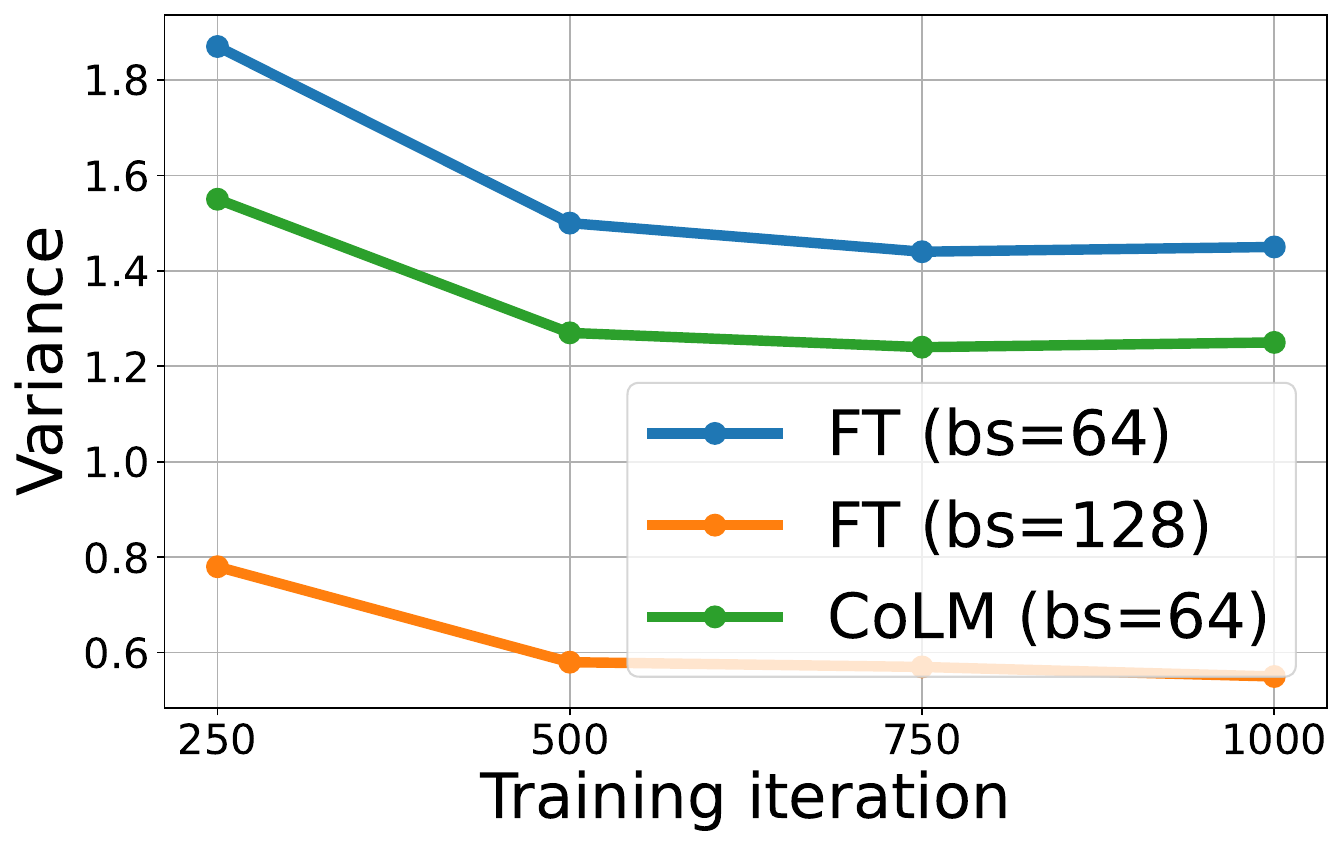}
        \caption{Variance of mini-batch grads}
        \label{fig:bias_var}
    \end{subfigure}
    \begin{subfigure}{0.32\textwidth}
        \centering
        \includegraphics[width=\columnwidth]{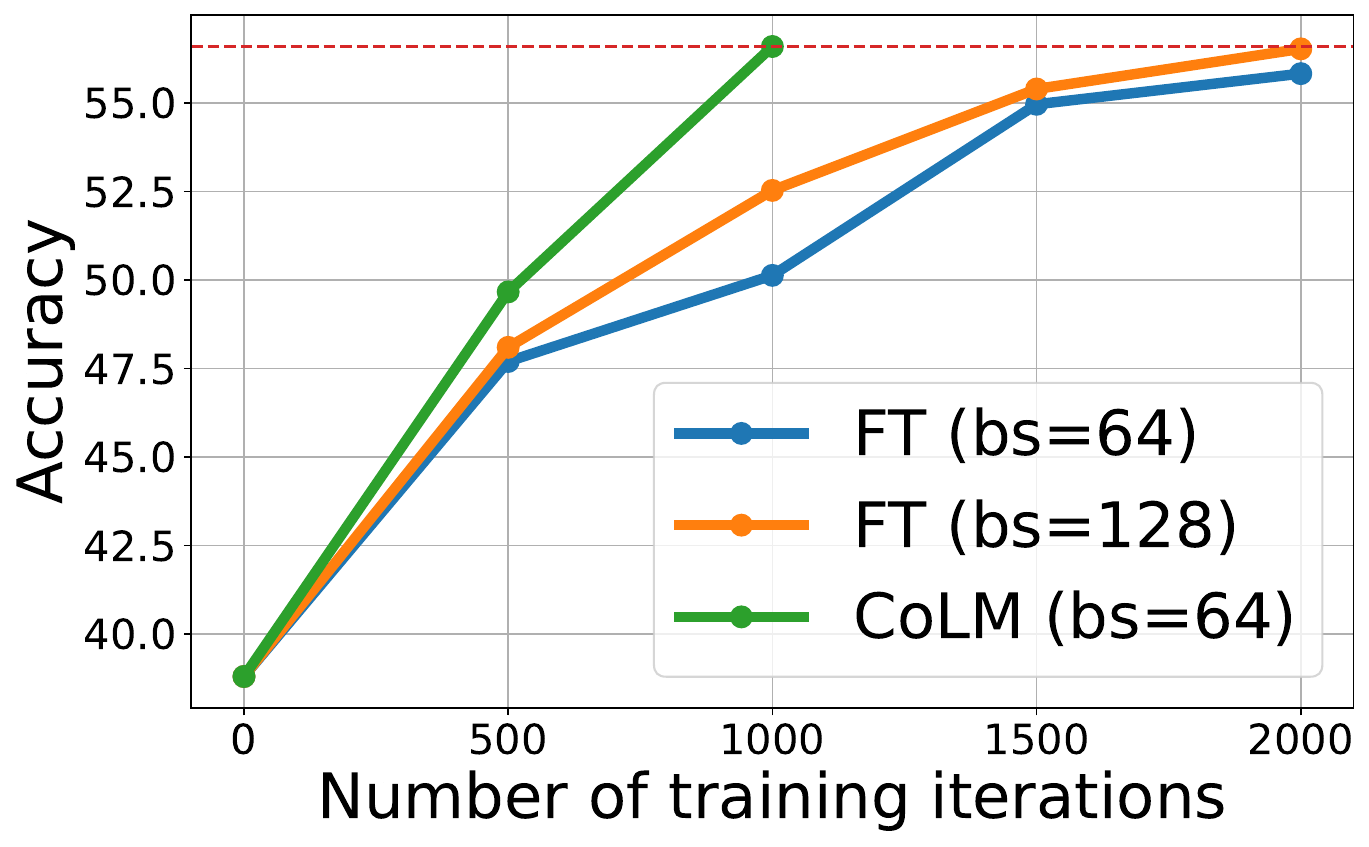}
        \caption{Convergence rate}
        \label{fig:convergence_plot}        
    \end{subfigure}
    \hfill
    \vspace{-2mm}
    \caption{
    Fine-tuning Phi-2 on MathInstruct. (a) Wall-clock time (including the time for \alg's selection), memory consumption, and performance of fine-tuning.
    \alg\ outperforms normal fine-tuning for 1K iterations with bs = 128 (256), while being 1.3x (2.7x) faster and consuming 20\% (45\%) less memory, respectively;
    (b) \alg\ has a smaller variance than random mini-batches of the same size; 
    (c) \alg\ converges much faster than normal fine-tuning (FT). }
    \label{fig:data_source_bias_var}
    \vspace{-6mm}
\end{figure*} 
\vspace{-1mm}

\subsection{Main results: MathInstruct}
\vspace{-1mm}
\textbf{\alg\ achieves a superior performance.} 
Table ~\ref{tab:main_math_results} shows the in-distribution and out-of-distribution accuracies of fine-tuning Phi-2 with LoRA on the MathInstruct dataset for 1K iterations. First, we see that using a larger mini-batch size improves the performance, which is consistent with the theoretical results in Eq. \eqref{eq:convergence}. 
Besides, we note that training for 1K steps with a mini-batch size of 64, 128, 256 corresponds to training on 25\%, 50\% and 100\% of the data, respectively. 
Remarkably, training on only 25\% of the data with \alg\ with bs = 64 (selected from 128) outperforms all the baselines and achieves a similar performance to fine-tuning for 2K iterations with bs = 128.
Interestingly, training for 1K iterations with \alg\ using bs = 64 even outperforms training with bs = 256.%
\looseness=-1

\textbf{\alg\ improves the performance of different models and batch sizes. } Figure~\ref{fig:math_model} shows that \alg~significantly outperforms normal fine-tuning across 6 different model architectures. Specifically, fine-tuning Phi-3 for 4K iterations using \alg\ with bs = 64 outperforms normal fine-tuning for 4K iterations with bs = 64 and bs = 128 by 5\% and 4.2\%. Notably, for better-performing models, \alg\ provides more performance improvement. This confirms its applicability to state-of-the-art architectures. Fig. \ref{fig:change_mini_batch_size} shows that \alg\ improves the performance of different batch sizes, including bs = 32, bs = 64, and bs = 128, without significantly increasing the training time.

\textbf{\alg~effectively reduces the Activation Memory.} The memory required for training an LLM can be decomposed into three parts: activation memory + weight memory + optimizer state memory. Memory efficient methods often reduce the weight or optimizer-state memory. For example, LoRA reduces the optimizer state memory but slightly increases the weight and activation memory by adding low-rank matrices. Orthogonal to such methods, \alg~effectively reduces the activation memory by reducing the batch size. Hence, stacked with memory-efficient methods such as LoRA and gradient accumulation, it can further reduce the memory, particularly when batch size is large. In that scenario, the activation memory dominates the optimizer state and weight memory. Figure~\ref{fig:memory_cost} shows the memory usage of normal fine-tuning and \alg~with half the batch size. Notably, for total bs = 2048, \alg~(bs = 1024) %
requires almost 2x less memory than fine-tuning with bs=2048. A larger batch size is useful in particular for pre-training. Our experiments %
in Appendix~\ref{apx:pretrain}, confirm the benefits of \alg~to pre-training. Furthermore, while memory efficient methods harm the performance, \alg~effectively improves the performance over training with \textit{larger batches}. A detailed analysis of memory overhead of our method can be found in Appendix~\ref{apx:memory_consumption}. 

\textbf{\alg\ speeds up training and improves convergence.} 
Fig. \ref{fig:speedup} compares the wall-clock time and average performance of \alg\ and normal fine-tuning Phi-2, using LoRA. For \alg, the wall-clock time includes the time for selecting the mini-batches. %
Remarkably, \alg\ with bs = 64 (selected from 128) speeds up training for 2K iterations with bs = 128 and 1K iterations with bs = 256 by 2.7x, while having {20\%} and {45\%} less memory requirements and superior performance. Figure~\ref{fig:bias_var} shows that throughout training, the variance of \alg\ (bs = 64) gradients is smaller than normal fine-tuning with bs = 64, which confirms our theoretical results in Sec. \ref{subsec:imbalance_data}. This yields a faster convergence compared to random mini-batches of the same size, as shown in Fig~\ref{fig:convergence_plot}. 
At the same time, although random bs = 128 has a lower variance than \alg\ with bs = 64, the more uniform speed of learning sources by \alg\ enables it to obtain a superior performance. 
Furthermore, we show that \alg~yields smaller loss compared to baselines throughout training and achieves the optimal performance in less training time in Appendix~\ref{apx:additional_results}.

\begin{table}[!t]
    \caption{Effect of different components in \alg. 
    }\vspace{-2mm}
    \label{tab:different_components}
    \vspace{-3mm}
    \begin{center}
    \begin{small}
    \scalebox{0.85}{
    \begin{tabular}{l|ccc}
        \toprule
        Method  & In-domain & Out-domain & Avg \\
        \midrule
        Weighted medoids & $48.5_{\pm 1.1}$ & $53.8_{\pm 0.7}$ & $51.1_{\pm 0.9}$ \\
        Medoids (using cosine distance) & $48.7_{\pm 0.3}$ & $54.0_{\pm 1.6}$ & $51.3_{\pm 0.9}$ \\
        Medoids (using $\ell_1$ distance) & $48.6_{\pm 0.3}$ & $54.4_{\pm 0.8}$ & $51.5_{\pm 0.5}$ \\
        Medoids of big sources \& keep small sources & $50.9_{\pm 1.0}$ & $58.4_{\pm 0.8}$ & $54.6_{\pm 0.6}$ \\
        Medoids of big sources selected separately \& keep small sources
        & $50.6_{\pm 0.2}$ & $59.6_{\pm 0.9}$ & $55.1_{\pm 0.5}$ \\
        \textbf{\alg}: Medoids of big sources selected separately for Adam \& keep small sources & $\mathbf{51.9_{\pm 0.3}}$ & $\mathbf{61.4_{\pm 1.6}}$ & $\mathbf{56.6_{\pm 0.9}}$ \\
        \bottomrule
    \end{tabular}
    }
    \end{small}
    \end{center}
    \vspace{-3mm}
\end{table}

\vspace{-2mm}
\subsection{Ablation studies}
\vspace{-2mm}
\textbf{The importance of different components.} Tab~\ref{tab:different_components} highlights the importance of different components in \alg. %
Notably, including all examples of the small data sources improves the accuracy significantly by around 3\% on average. This finding well aligns with our analysis in Sec.~\ref{subsec:imbalance_data}. Selecting subsets separately per source and normalizing gradients for Adam further boost the performance by 0.5\% and 1.5\%, respectively, justifying our methods in Sections~\ref{subsec:adam_optimizer} and~\ref{subsec:low_dim_grad}. Using uniform weights for selected example slightly improves the performance by 0.4\%.

\textbf{Sparsification criteria for $V$-projection.}
Table~\ref{tab:selection_criteria} compares the performance of \alg\ for different sparsification criteria. Keeping parameters with the largest gradient magnitude achieves the best performance. This result is consistent with that of~\citep{guo2024zeroth}, which show that parameters with the largest gradient magnitude are the most salient.
Weight magnitude, a common approach in network pruning~\citep{han2015learning}, yields a slightly lower performance than random sparsification. 

\textbf{Sparsity level.} Table~\ref{tab:sparsity_level} illustrates the performance of \alg\ when changing the dimensionality $(h)$ of the sparsified gradients. The accuracy peaked at $h = 2560$, which equals the dimensionality of the hidden state of Phi-2, and gradually decreases for larger values of $h$. This result is expected as gradients in high dimension suffer from the curse of dimensionality, yielding a sub-optimal solution. \looseness=-1

\textbf{Choices of low-dimensional gradient approximations.} We compare the usage of sparsified MeZO gradient in our method with the sparsified actual gradient (via backprop),  projected actual gradient, and low-rank MeZO gradient. 
For sparsified actual gradient, we apply the same sparsification technique as our \alg.
For projected actual gradient, we apply a random projection to the actual gradient and leverage the memory-efficient implementation introduced by~\cite{park2023trak}.
For low-rank MeZO gradient, we use the low-rank projection technique with SVD in GaLore~\cite{zhao2024galore} and adopt their best setting with rank $r = 8$ and subspace change frequency $T = 200$. Table~\ref{tab:low_dim_grad} shows that the sparsified MeZO gradient has a clear margin over the other choices of low-dimensional gradient estimates, highlight the effectiveness of zero-th order gradient and our sparsification strategy in selecting high-quality subsets.\looseness=-1

\textbf{Choices of layers.} We replace the last $V$-projection layer with the last FC layer and the combination of the last $Q, K, V$ projections in \alg. As can be seen in Table~\ref{tab:layer_choice}, using the MeZO gradient of the last $V$ projection matrix yields a gap of almost 2\% compared to other choices of layers. 

\textbf{Completion length.} \citep{zhao2024long} found that fine-tuning on examples with the longest completion length %
improves the performance. 
Figure~\ref{fig:completion_length} in the Appendix shows that \alg, in contrast, selects examples with shorter answers (avg length $\approx$ 120) than the average completion length of the data, which is about 130. %
Nevertheless, \alg~significantly improves over selecting examples with the longest completion length (avg length $\approx$ 210) in random mini-batch as indicated in Table~\ref{tab:main_math_results}. 

\begin{table}[!t]
\centering
\begin{minipage}{.5\textwidth}
    \caption{Effect of the sparsification criteria.}\vspace{-1mm}
    \label{tab:selection_criteria}\vspace{-4mm}
    \begin{center}
    \begin{small}
    \scalebox{0.9}{
    \begin{tabular}{l|ccc}
        \toprule
        Criteria & In-domain & Out-domain & Avg \\
        \midrule
        random & $51.1_{\pm 0.9}$ & $59.6_{\pm 2.5}$ & $55.4_{\pm 1.7}$ \\
        weight & $51.1_{\pm 0.7}$ & $59.6_{\pm 0.1}$ & $55.3_{\pm 0.3}$ \\
        weight $\times$ grad & $51.4_{\pm 0.3}$ & $58.8_{\pm 1.5}$ & $55.1_{\pm 0.9}$ \\
        grad & $\mathbf{51.9_{\pm 0.3}}$ & $\mathbf{61.4_{\pm 1.6}}$ & $\mathbf{56.6_{\pm 0.9}}$ \\
        \bottomrule
    \end{tabular}
    }
    \end{small}
    \end{center}
\end{minipage}%
\hfill
\begin{minipage}{.45\textwidth}
    \caption{Effect of the sparsity level.}\vspace{-1mm}
    \label{tab:sparsity_level}\vspace{-4mm}
    \begin{center}
    \begin{small}
    \scalebox{0.9}{
    \begin{tabular}{l|ccc}
        \toprule
        Dim & In-domain & Out-domain & Avg \\
        \midrule
        1280 & $50.4_{\pm 0.8}$ & $57.8_{\pm 1.8}$ & $54.1_{\pm 1.3}$ \\
        2560 & $\mathbf{51.9_{\pm 0.3}}$ & $\mathbf{61.4_{\pm 1.6}}$ & $\mathbf{56.6_{\pm 0.9}}$ \\
        5120 & $51.2_{\pm 0.1}$ & $60.1_{\pm 0.9}$ & $55.7_{\pm 0.5}$ \\
        10240 & $51.6_{\pm 0.5}$ & $59.4_{\pm 0.7}$ & $55.5_{\pm 0.4}$ \\
        \bottomrule
    \end{tabular}
    }
    \end{small}
    \end{center}
\end{minipage}
\end{table}

\begin{table}[!t]
\centering
\begin{minipage}{.5\textwidth}
    \caption{Comparison between different low-dimensional gradient approximations.}
    \label{tab:low_dim_grad}\vspace{-4mm}
    \begin{center}
    \begin{small}
    \scalebox{0.8}{
    \begin{tabular}{l|ccc}
        \toprule
        Approx & In-domain & Out-domain & Avg \\
        \midrule
        Sparsified actual grad & $51.0_{\pm 0.3}$ & $58.3_{\pm 0.3}$ & $54.7_{\pm 0.3}$ \\
        Projected actual grad & $50.9_{\pm 0.5}$ & $59.4_{\pm 0.4}$ & $55.2_{\pm 0.4}$ \\
        Low-rank MeZO grad & $51.0_{\pm 0.2}$ & $58.1_{\pm 0.8}$ & $54.6_{\pm 0.4}$ \\
        Sparsified MeZO grad & $\mathbf{51.9_{\pm 0.3}}$ & $\mathbf{61.4_{\pm 1.6}}$ & $\mathbf{56.6_{\pm 0.9}}$ \\
        \bottomrule
    \end{tabular}
    }
    \end{small}
    \end{center}
\end{minipage}%
\hfill
\begin{minipage}{.45\textwidth}
    \caption{Comparison between different choices of layers.}
    \label{tab:layer_choice}\vspace{-4mm}
    \begin{center}
    \begin{small}
    \scalebox{0.9}{
    \begin{tabular}{l|ccc}
        \toprule
        Layer(s) & In-domain & Out-domain & Avg \\
        \midrule
        FC & $50.6_{\pm 0.8}$ & $58.4_{\pm 0.6}$ & $54.5_{\pm 0.1}$ \\
        V proj & $\mathbf{51.9_{\pm 0.3}}$ & $\mathbf{61.4_{\pm 1.6}}$ & $\mathbf{56.6_{\pm 0.9}}$ \\
        QKV projs & $51.3_{\pm 1.0}$ & $58.2_{\pm 1.1}$ & $54.7_{\pm 1.0}$ \\
        \bottomrule
    \end{tabular}
    }
    \end{small}
    \end{center}
\end{minipage}
\end{table}

\begin{table}[t!]
    \caption{Accuracies ($\uparrow$) when fine-tuning Phi-2 with LoRA on three datasets from the SuperGLUE benchmark for 80 iterations. \alg\ with bs = 64 (from 128) effectively improves the performance of normal fine-tuning with bs = 64. When using clusters found by the fine-tuned model, \alg\ outperforms fine-tuning with bs = 128.
    }\vspace{-2mm}
    \label{tab:main_superglue_results}
    \vspace{-3mm}
    \begin{center}
    \begin{small}
    \scalebox{0.9}{
    \begin{tabular}{l|ccc|c}
        \toprule
        & SST-2 & CB  & MultiRC  & Avg\\
        \midrule
        Pretrained & 56.6 & 45.5 & 46.3 & 49.5 \\
        \midrule
        FT (bs=64) & $91.4_{\pm  0.2}$ & $69.1_{\pm 1.8}$ & $62.0_{\pm 2.2}$  & $74.2_{\pm 1.4}$ \\
        \textbf{\alg} (clustering during fine-tuning) & $92.3_{\pm 0.4}$ & $73.3_{\pm 1.7}$ & $70.5_{\pm 3.6}$ & $78.7_{\pm 1.9}$ \\
        \textbf{\alg} (clustering of the fine-tuned model) & $\mathbf{92.4_{\pm 0.7}}$ & $\mathbf{77.6_{\pm 3.7}}$ & $\mathbf{73.0_{\pm 3.4}}$ & $\mathbf{81.0}_{\pm 2.6}$ \\
        \midrule
        FT (bs=128) & $92.1_{\pm 1.0}$ & $72.1_{\pm 0.8}$ & $72.6_{\pm 5.2}$ & $78.9_{\pm 2.3}$ \\
        \bottomrule
    \end{tabular}
    }
    \end{small}
    \end{center}
    \vspace{-5mm}
\end{table}

\vspace{-2mm}
\subsection{Datasets without Specific Sources: SuperGLUE Benchmark}
\vspace{-2mm}
{We apply \alg\ to fine-tuning Phi-2 with bs = 64 (selected from 128) for 80 iterations on three classification datasets (SST-2, CB, MultiRC) in the SuperGLUE benchmark. Note that the SuperGLUE datasets do not have any source information. 
To find sources, we warm up the model for 20 iterations with bs = 64, and then cluster the model's hidden states. We consider each cluster as a source and define small sources as those with less than $|V|/c$ examples, where $c$ is the number of clusters.
We update the clustering four times during fine-tuning. As shown in Table \ref{tab:main_superglue_results}, \alg\ outperforms normal fine-tuning with bs = 64 by 4.5\% on average. This shows \alg 's applicability to datasets without specified sources. 
Additionally, we find that clusters found by the fine-tuned model can significantly enhance the results. Compared to updating the clusters during training, using the clusters by a fine-tuned model improved the performance by 6.8\% on average. Compared to standard fine-tuning with bs = 128, \alg\ also improved the performance by 2.1\%. To leverage this, one can fine-tune a smaller proxy model on a smaller random subset of the data and cluster its hidden states to find sources more accurately, without a large overhead. For SST-2 and MultiRC, we trained the model on a randomly selected subset of 3,000 examples to find the clusters. 
}

\section{Conclusion}
To simulate training with larger mini-batch sizes with limited memory, an effective approach is to find small mini-batch coresets that match the gradient of larger random batches. 
We showed that for language data with highly imbalanced sources, mini-batch coresets found by gradient matching do not contain representative examples of the small sources. Thus, one should keep all examples of the small sources and augment them with examples that match the gradient of big sources in the larger batch. 
To enable solving the gradient matching problem effectively, 
we used techniques from zeroth-order optimization and model pruning to find lower-dimensional gradient estimates. We also showed that matching the normalized gradient of larger batches provides superior performance for training with Adam.
Our method, \alg, outperforms fine-tuning Phi models on MathInstruct with 4x larger batch size, while being 2.7x faster, and also improves fine-tuning on the SuperGLUE benchmark.\looseness=-1

\bibliography{ref}
\bibliographystyle{plainnat}

\newpage
\appendix

\section{Variance reduction by Facility Location}
\textbf{Outline.} In this section, we present the proofs for our theoretical results in Section~\ref{subsec:imbalance_data}. Firstly, we introduce the notations, problem formulation, and all assumptions. Secondly, Theorem~\ref{thm:dense_areas} is the result of Lemma~\ref{lem:dense_areas} and the first part of Lemma~\ref{lem:bound_for_dense_areas}. In addition, we provide a bound for the local optimal solution in Corollary~\ref{cor:bound_local_solution}. Thirdly, we prove Theorem~\ref{thm:local_evaluation} using Lemma~\ref{lem:local_evaluation}. Finally, Theorem~\ref{thm:variance_reduction} is subsequent to Lemma~\ref{lem:variance_bound}.

\textbf{Notations.} $d(\cdot, \cdot): V \times V \to \mathbb{R}$ is the distance between two elements. $N_\alpha(v) = \{ w: d(v, w) \leq \alpha \}$ is the set of elements within a distance $\alpha$ from $v$, called $\alpha$-neighborhood. $g(\alpha): \mathbb{R} \to \mathbb{R}$ is the volume of a ball radius $\alpha$ centered at a point in the metric space. In $\mathbb{R}^D$, we have $g(\alpha) = O(\alpha^D)$.

\textbf{Facility Location.} For a set $V$, we solve the k-medoid problem by finding a subset $S$ such that $|S| = k$ and $S$ minimizes $L(S) = \frac{1}{|V|} \sum_{v \in V} \min_{e \in S} d(v, e)$. We can turn $L$ into a monotone submodular function by using an auxiliary element $v_0$: $f(S) = L(\{v_0\}) - L(S \cup \{v_0\})$.

\textbf{Settings.} We have a dataset $V$ with $n$ examples in $\mathbb{R}^D$ which is drawn from an underlying infinite set, according to some unknown probability distribution. Let $A$ such that $|A| = k$ be the global optimal solution of the facility location problem in the infinite set.

\begin{assumption}[Data structure]
    For each $e_i \in A$, there is a neighborhood of radius at least $\alpha^\star$, where the probability is at least $\beta$ at all points, for some constant $\alpha^\star$ and $\beta$. 
\end{assumption}

It is known that $f$ is decomposable. In other words, $f$ can be written as sum of (non-negative) monotone submodular functions as follows: $f(S) = \frac{1}{|V|} \sum_{v \in V} f_v(S)$. We define the evaluation of $f$ restricted to $D \subseteq V$ as follows: $f_D(S) = \frac{1}{|D|} \sum_{i \in D} f_i(S)$. Assume that the objective function $f$ have the following two properties.

\begin{assumption}[Lipschitz property]
    $f: 2^V \to \mathbb{R}$ is $\lambda-$Lipschitz. In other words, for equal sized sets $S = \{ v_1, v_2, \ldots, v_k \}$ and $S' = \{ v'_1, v'_2, \ldots, v'_k \}$ and for any matching of elements $M = \{ (v_1, v'_1), (v_2, v'_2), \ldots, (v_k, v'_k) \}$, the difference between $f(S)$ and $f(S')$ is bounded by the total of distances between respective elements $|f(S) - f(S')| \leq \lambda \sum_i d(v_i, v'_i)$.
\end{assumption}

\begin{assumption}[Bound property]
    $f_i$ is bounded, and without loss of generality $0 \leq f_i(S) \leq 1$ for $1 \leq i \leq |V|, S \subseteq V$.
\end{assumption}

The dataset is randomly partition into $m$ large mini-batches $\{\M_j\}_{j=1}^m$ of size $r = \frac{n}{m}$. We denote the local optimal solution for each $\M_j$ as $A_j$ where $|A_j| = k$. We are showing in the next Lemma that when the size of the training set is large enough, it has many examples from all the dense areas. 

\begin{lemma}\label{lem:dense_areas}
    A number of elements $n \geq \frac{2km \log(km / \delta)}{\beta g(\alpha)}$, where $\alpha \leq \alpha^\star$ suffices to have at least $km \log(km / \delta)$ elements in the $\alpha-$neighborhood of each $e_i \in A$ with probability at least (1 - $\delta$), for small values of $\delta$.
\end{lemma}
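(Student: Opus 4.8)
The plan is to treat the sample of $n$ i.i.d. points drawn from the underlying distribution as $n$ independent Bernoulli trials, one for each of the $km$ targets we care about, and then apply a Chernoff–type tail bound together with a union bound over all $km$ targets. Here the $km$ targets are the $\alpha$-neighborhoods $N_\alpha(e_i)$ of the $k$ global medoids $e_i \in A$; the factor $m$ enters because we will later need the conclusion to hold simultaneously across the $m$ random mini-batches, and it is cleanest to bake that multiplicity into the bound at this stage (the $m$ parts of Theorem~\ref{thm:dense_areas} then follow by a further, coarser union/counting argument). For a single fixed $e_i$, the Data-structure assumption gives that the probability mass of $N_\alpha(e_i)$ is at least $\beta \cdot g(\alpha)$ whenever $\alpha \le \alpha^\star$, since the density is at least $\beta$ throughout a ball of radius $\alpha^\star \ge \alpha$ and that ball has volume $g(\alpha)$. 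Hence the number of sampled points falling in $N_\alpha(e_i)$, call it $Z_i$, is stochastically lower-bounded by a $\mathrm{Binomial}(n, \beta g(\alpha))$ random variable, with mean $\mu \ge n \beta g(\alpha)$.

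The key steps, in order: (1) Fix a target $e_i$ and write $Z_i = \sum_{j=1}^n \mathbf{1}[v_j \in N_\alpha(e_i)]$, with $\mathbb{E}[Z_i] \ge n\beta g(\alpha) =: \mu$. (2) Apply the multiplicative Chernoff lower tail $\Pr[Z_i \le (1-\gamma)\mu] \le \exp(-\gamma^2 \mu / 2)$ with $\gamma = 1/2$, so that $\Pr[Z_i \le \mu/2] \le \exp(-\mu/8)$; I would actually use $\gamma=1/2$ so the "surviving" count is $\mu/2 \ge \tfrac12 n \beta g(\alpha)$. (3) Union-bound over the $km$ targets: $\Pr[\exists i : Z_i \le \mu/2] \le km\exp(-\mu/8)$. (4) Impose $km\exp(-\mu/8) \le \delta$, i.e. $\mu \ge 8\log(km/\delta)$; combined with $\mu \ge n\beta g(\alpha)$ this is guaranteed as soon as $n\beta g(\alpha) \ge 8\log(km/\delta)$. (5) Finally, to land on the exact threshold in the statement, observe that the hypothesis $n \ge \frac{2km\log(km/\delta)}{\beta g(\alpha)}$ gives $\mu \ge 2km\log(km/\delta) \ge 8\log(km/\delta)$ for $km \ge 4$ (and the small cases $km \le 3$ are handled directly or absorbed into constants), and simultaneously yields the surviving count $\mu/2 \ge km\log(km/\delta)$, which is exactly the "at least $km\log(km/\delta)$ elements in each $\alpha$-neighborhood" claimed. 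So with probability at least $1-\delta$, every $N_\alpha(e_i)$ contains at least $km\log(km/\delta)$ sample points.

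I do not expect a serious obstacle here — this is a textbook concentration-plus-union-bound argument — but the one place that needs care is matching the constants in the paper's stated threshold $n \ge \frac{2km\log(km/\delta)}{\beta g(\alpha)}$ to the constant $8$ coming out of the $\gamma = 1/2$ Chernoff bound. Two ways to reconcile: either use a sharper Chernoff constant (e.g. the bound $\Pr[Z \le \mu/2]\le \exp(-\mu/8)$ is loose; one can get $\exp(-(1-\ln 2)\mu)\le\exp(-0.3\mu)$, and then $\mu \ge 2km\log(km/\delta)$ with $km\ge 2$ already forces $0.3\mu \ge \log(km/\delta)$), or simply note that the "$2$" in the numerator is generous enough that $2km\log(km/\delta) \ge 8\log(km/\delta)$ holds for all $km \ge 4$, which covers every regime of interest (at least two sources and at least two mini-batches). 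The second, slightly subtle point is that the targets $N_\alpha(e_i)$ need not be disjoint, so the events $\{Z_i \le \mu/2\}$ are not independent — but that is irrelevant, since the union bound in step (3) requires no independence, and the per-target bound in step (2) only uses the marginal that each individual point lands in $N_\alpha(e_i)$ with probability $\ge \beta g(\alpha)$. With these constants pinned down, the lemma follows, and its conclusion together with the counting/union-bound over the $m$ partitions (that each of the $m$ random mini-batches of size $r = n/m$ inherits at least one point from each $\alpha$-neighborhood) gives Theorem~\ref{thm:dense_areas}.
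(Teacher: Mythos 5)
Your proposal is correct and takes essentially the same route as the paper: lower-bound each neighborhood's sampling probability by $\beta g(\alpha)$, note the expected count is at least $2km\log(km/\delta)$, apply a Chernoff lower-tail bound so that each $N_\alpha(e_i)$ retains at least half of that, and finish with a union bound over the neighborhoods. If anything, your use of the standard multiplicative Chernoff bound (with the explicit constant-matching caveat, e.g.\ $km\ge 4$) is tidier than the paper's own exponential-moment manipulation, so no gap to report.
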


\begin{proof}
     The probability of a random element being in $N_\alpha(e_i)$ is at least $\beta g(\alpha)$. Thus, the expected number of $\alpha-$neighbors of an $e_i \in A$ is $E[|N_\alpha(e_i)|] \geq 2km \log(km / \delta)$. 

    From the Chernoff bound, we have for every $t < 0$,
    \begin{align}
        P[|N_\alpha(e_i)| \leq km \log(km / \delta)] &\leq E[\exp({t * |N_\alpha(e_i)|})] \exp(-t * km \log(km / \delta)) \nonumber \\
        &\leq \exp(t * (E[|N_\alpha(e_i)|] - km \log(km / \delta))) \nonumber \\
        &\leq \exp(t * km \log(km / \delta)).
    \end{align}
    Let $t = -\frac{1}{km}$ in the above equation, we have
    \begin{equation}
        P[|N_\alpha(e_i)| \leq km \log(km / \delta)] \leq \exp(-\log(km / \delta)) = \frac{\delta}{km}.
    \end{equation}
    Therefore, the probability that some $e_i \in A$ does not have a large enough neighborhood is
    \begin{align}
        P[\bigcup\limits_{i=1}^k |N_\alpha(e_i)|  \leq km \log(km / \delta)] &\leq \sum_{i=1}^k P[|N_\alpha(e_i)| \leq km \log(km / \delta)] \nonumber \\
        &\leq k \frac{\delta}{km} = \frac{\delta}{m} \leq \delta.
    \end{align}
    Therefore, with probability at least $1 - \delta$, the $\alpha-$neighborhood of each element $e_i \in A$ contains at least $km \log(km / \delta)$ elements.
\end{proof}

Next, we prove that sampling with replacement guarantees that each mini-batch has elements from all the dense areas.

\begin{lemma}[Sampling with replacement]\label{lem:sampling_with_replacement}
    If for each $e_i \in A$, $|N_\alpha(e_i)| = m \log (k/\delta)$, and if $\M_j$ is a mini-batch of size $n/m$ sampling with replacement, then $\M_j$ contains elements from all $k$ dense areas with probability at least $(1 - \delta)$. 
\end{lemma}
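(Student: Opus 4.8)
\textbf{Proof proposal for Lemma~\ref{lem:sampling_with_replacement}.}

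The plan is to reduce the event "$\M_j$ misses some dense area" to a union of $k$ coupon-collector-type events and bound each by a standard exponential estimate. First I would fix one center $e_i \in A$ and compute the probability that the mini-batch $\M_j$, drawn with replacement, contains no element of $N_\alpha(e_i)$. Since $|N_\alpha(e_i)| = m\log(k/\delta)$ and the dataset has $n$ elements, each independent draw lands in $N_\alpha(e_i)$ with probability $p_i = |N_\alpha(e_i)|/n = m\log(k/\delta)/n$. Drawing $r = n/m$ times with replacement, the probability of avoiding $N_\alpha(e_i)$ entirely is $(1-p_i)^{n/m}$. Using $1-x \leq e^{-x}$, this is at most $\exp(-p_i \cdot n/m) = \exp\bigl(-\tfrac{m\log(k/\delta)}{n}\cdot\tfrac{n}{m}\bigr) = \exp(-\log(k/\delta)) = \delta/k$.

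Next I would take a union bound over the $k$ centers in $A$: the probability that $\M_j$ misses \emph{at least one} dense area is at most $k \cdot (\delta/k) = \delta$. Hence with probability at least $1-\delta$, $\M_j$ contains at least one element from each of the $k$ $\alpha$-neighborhoods $N_\alpha(e_i)$, which is exactly the claim. I would be a little careful about the direction of the hypothesis: the lemma posits $|N_\alpha(e_i)| = m\log(k/\delta)$ exactly (or "at least"), and the bound only improves if the neighborhoods are larger, so "at least" suffices throughout — I would phrase the inequality $(1-p_i)^{n/m} \leq \exp(-p_i n/m)$ and note monotonicity in $p_i$.

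There is essentially no serious obstacle here; the only thing to watch is bookkeeping. The $1-x \le e^{-x}$ step needs $p_i \in [0,1]$, which holds as long as $m\log(k/\delta) \le n$ — and this is implied by the ambient regime of Lemma~\ref{lem:dense_areas} (where $n \ge \tfrac{2km\log(km/\delta)}{\beta g(\alpha)} \ge m\log(k/\delta)$ for the relevant range of parameters), so I would either invoke that or state it as a mild side condition. The main conceptual point, such as it is, is recognizing that sampling $n/m$ items with replacement from a set in which a target subset occupies a $\tfrac{m\log(k/\delta)}{n}$-fraction makes the miss probability telescope to $\delta/k$ precisely because of the cancellation between the $n/m$ exponent and the $m/n$ factor in $p_i$; after that, it is just a union bound.
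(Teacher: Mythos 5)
Your proposal is correct and takes essentially the same route as the paper: compute the per-center miss probability $\left(1-\tfrac{m\log(k/\delta)}{n}\right)^{n/m}$, collapse it to $\delta/k$, and finish with a union bound over the $k$ centers. The only difference is that you use the exact bound $1-x\le e^{-x}$ (with the mild side condition $p_i\le 1$), whereas the paper relies on the approximation $(1-1/x)^{x}\approx e^{-1}$, so your version is if anything slightly cleaner.
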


\begin{proof}
    The number of mini-batches $\M_j$ does not contain elements from $N_\alpha(e_i)$ is $(n-m \log (k/\delta))^{(n/m)}$. The total number of mini-batches of size $n/m$ is $n^{(n/m)}$. Thus, the probability of $\M_j$ does not contain elements from $N_\alpha(e_i)$ is $(\frac{n-m \log (k/\delta)}{n})^{(n/m)} \approx (1 - \frac{1}{\frac{n}{m \log (k/\delta)}})^{(n/m)} = exp(-\log (k/\delta)) = \delta/k$. Therefore, the probability that $\M_j$ does not contain elements from all $k$ dense areas is
    \begin{align}
        P[\bigcup\limits_{i=1}^k|\M_j \cap N_\alpha(e_i)| = 0] \leq \sum_{i=1}^k P[|\M_j \cap N_\alpha(e_i)| = 0] = \delta.
    \end{align}
\end{proof}

The above guarantee also holds for sampling without replacement as shown in the following lemma.

\begin{lemma}[Sampling without replacement]\label{lem:bound_for_dense_areas}
    If for each $e_i \in A$, $|N_\alpha(e_i)| \geq km \log(km / \delta)$, and if $V$ is partitioned into $m$ mini-batch $\M_1, \M_2, \ldots, \M_m$, then each $\M_j$ contains elements from all the dense areas and $|f(A) - f(A_j)| \leq \lambda \alpha k$ with probability at least $(1 - \delta)$.
\end{lemma}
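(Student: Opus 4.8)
The statement bundles two things: a combinatorial/probabilistic claim that every part $\M_j$ meets every dense area $N_\alpha(e_i)$, and a deterministic consequence for the objective gap that follows from the Lipschitz assumption. The plan is to prove them in that order, conditioning on the good event of the first for the second.

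First I would establish the hitting property, which is the sampling-without-replacement analogue of Lemma~\ref{lem:sampling_with_replacement}. Instead of a product of independent misses one gets a hypergeometric tail: fixing a dense area $N_\alpha(e_i)$ with $|N_\alpha(e_i)|\ge km\log(km/\delta)$ and a fixed part $\M_j$ of size $n/m$, the probability that $\M_j$ avoids $N_\alpha(e_i)$ entirely is $\binom{n-|N_\alpha(e_i)|}{n/m}\big/\binom{n}{n/m}$, and bounding each factor by $\frac{n-|N_\alpha(e_i)|-t}{n-t}\le 1-\frac{|N_\alpha(e_i)|}{n}$ and then using $(1-x)^y\le\exp(-xy)$ gives at most $\exp(-|N_\alpha(e_i)|/m)\le\exp(-k\log(km/\delta))\le\delta/(km)$. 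A union bound over the $km$ pairs $(e_i,\M_j)$ with $i\in[k]$, $j\in[m]$ then shows that, with probability at least $1-\delta$, every part $\M_j$ contains at least one point from each of $N_\alpha(e_1),\dots,N_\alpha(e_k)$.

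Next, conditioning on that event and fixing $j$, I would build a shadow solution: for each $e_i\in A$ pick $a_i'\in\M_j\cap N_\alpha(e_i)$ and set $A'=\{a_1',\dots,a_k'\}$, a feasible $k$-subset of $\M_j$ with $d(e_i,a_i')\le\alpha$ under the natural matching. The $\lambda$-Lipschitz assumption immediately yields $|f(A)-f(A')|\le\lambda\sum_i d(e_i,a_i')\le\lambda\alpha k$. Combining this with the optimality of $A_j$ on the batch $\M_j$ (so $A_j$ is at least as good as the feasible shadow $A'$) and the global optimality of $A$ (so $A$ is at least as good as $A_j$) sandwiches $f(A_j)$ between $f(A)-\lambda\alpha k$ and $f(A)$, hence $|f(A)-f(A_j)|\le\lambda\alpha k$ on the good event. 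Here I would be careful to run the whole comparison against a single, consistently-chosen facility-location objective — the batch objective $f_{\M_j}$, which inherits $\lambda$-Lipschitzness from the decomposition into the $f_i$'s — with the reverse-direction bound either obtained by applying the same shadow construction to $A_j$ or deferred to the concentration estimate used in Lemma~\ref{lem:local_evaluation}.

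The main obstacle is the first step: converting the independent-miss argument of Lemma~\ref{lem:sampling_with_replacement} into a clean hypergeometric tail and getting the constants to line up so the union bound over all $km$ (dense area, part) pairs still closes at $\delta$ — this is precisely where the hypothesis $|N_\alpha(e_i)|\ge km\log(km/\delta)$ (as opposed to the weaker $m\log(k/\delta)$ in the with-replacement case) is needed. Once the good event is in hand, the objective bound is essentially a one-line application of the Lipschitz property, modulo the bookkeeping of which restricted objective is being evaluated.
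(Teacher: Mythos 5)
Your proposal is correct and follows the same overall strategy as the paper: first show that, with probability at least $1-\delta$, every part $\M_j$ intersects every $\alpha$-neighborhood $N_\alpha(e_i)$, then build a witness $k$-set inside $\M_j$ (your ``shadow'' $A'$, the paper's $S_j$) matched element-by-element to $A$ at distance at most $\alpha$, and invoke $\lambda$-Lipschitzness to get $|f(A)-f(A_j)|\leq \lambda\alpha k$. The only substantive difference is in the probabilistic step: the paper carves each $N_\alpha(e_i)$ into a disjoint subset $S_i$ of size $m\log(km/\delta)$, bounds the per-part miss probability by $(1-1/m)^{m\log(km/\delta)}\approx\delta/(km)$, union-bounds over $i$, and then multiplies over the $m$ parts as if they were independent (using $(1-\delta/m)^m\approx 1-\delta$); you instead bound the miss probability directly by a hypergeometric tail, $\binom{n-|N_\alpha(e_i)|}{n/m}/\binom{n}{n/m}\leq\exp(-|N_\alpha(e_i)|/m)\leq\delta/(km)$, and union-bound over all $km$ pairs $(e_i,\M_j)$. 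Your route is cleaner and in fact more rigorous, since it avoids the independence-across-parts product and the limiting approximations in the paper's version, while using the hypothesis $|N_\alpha(e_i)|\geq km\log(km/\delta)$ at full strength. Your closing caveat about which objective $A_j$ optimizes (the batch objective $f_{\M_j}$ versus the global $f$) is well taken: the paper's final inequality $f(A)-f(A_j)\leq f(A)-f(S_j)$ implicitly treats $A_j$ as optimal for $f$ over subsets of $\M_j$, and your proposal to either apply the shadow construction consistently to one objective or defer to the concentration bound of Lemma~\ref{lem:local_evaluation} handles this more carefully than the paper does.
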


\begin{proof}
    Because $|N_\alpha(e_i)| \geq km \log(km / \delta)$, we can construct $k$ mutually disjoint subsets $\{S_i\}_{i=1}^k$ such that $S_i \in N_\alpha(e_i)$ and $|S_i| = m \log(km / \delta)$. Each element in $S_i$ goes into a particular $\M_j$ with a probability of $1/m$. 
    The probability that a particular $\M_j$ does not contain an element in $S_i$ is $P[|\M_j \cap S_i| = 0] = (1 - 1/m)^{m \log(km / \delta)} = \frac{\delta}{km}$. 
    The last equality hold because $\lim\limits_{m \to +\infty} (1 - 1/m)^m = \exp(-1)$. The probability that $\M_j$ does not intersect with at least one $S_i$ is 
    \begin{align}
        P[\bigcup\limits_{i=1}^k|\M_j \cap S_i| = 0] \leq \sum_{i=1}^k P[|\M_j \cap S_i| = 0] = \frac{\delta}{m}.
    \end{align}
    Therefore, the probability that $\M_j$ contains elements from every $S_i$ is at least $1 - \frac{\delta}{m}$. Thus, the probability that every $\M_j$ contains elements from every $S_i$ is
    \begin{align}
        P[\bigcap\limits_{j=1}^m (\bigcap\limits_{i=1}^k|\M_j \cap S_i| > 0)] = \prod\limits_{j=1}^m P[\bigcap\limits_{i=1}^k|\M_j \cap S_i| > 0] = (1 - \frac{\delta}{m})^m \approx 1 - \delta.
    \end{align}
    Thus, with high probability $1 - \delta$, every $\M_j$ has a subset $S_j$ such that are $|S_j| = |A| = k$ and $|S_j \cap N_\alpha(e_i)| > 0$ for $e_i \in A$. Therefore, $f(A) - f(A_j) \leq f(A) - f(S_j) \leq \lambda \alpha k$.
\end{proof}

From Lemmas~\ref{lem:dense_areas} and~\ref{lem:bound_for_dense_areas}, we have the following corollary
\begin{corollary}[Bound for local optimal solution]\label{cor:bound_local_solution}
    For $n \geq \frac{2km \log(4km / \delta)}{\beta g(\frac{\epsilon}{\lambda k})}$, where $\frac{\epsilon}{\lambda k} \leq \alpha^\star$, if $V$ is partitioned into $m$ mini-batches $\M_1, \M_2, \ldots, \M_m$, then for sufficiently small values of $\delta$, we have $|f(A) - f(A_j)| < \epsilon$ with a probability of at least $1 - \delta$.
\end{corollary}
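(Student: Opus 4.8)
The plan is to obtain the corollary by chaining Lemma~\ref{lem:dense_areas} with Lemma~\ref{lem:bound_for_dense_areas}, choosing the still-free neighborhood radius $\alpha$ so that the per-mini-batch approximation error $\lambda\alpha k$ guaranteed by Lemma~\ref{lem:bound_for_dense_areas} is at most $\epsilon$, and allocating the failure probability $\delta$ between the two lemmas via a union bound.

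Concretely, first I would set $\alpha = \epsilon/(\lambda k)$; the hypothesis $\epsilon/(\lambda k) \le \alpha^\star$ makes this an admissible radius, so the density assumption (probability at least $\beta$ throughout $N_\alpha(e_i)$ for each $e_i \in A$) applies. Next I would invoke Lemma~\ref{lem:dense_areas} with confidence parameter $\delta/2$: since the stated bound $n \ge \frac{2km\log(4km/\delta)}{\beta g(\alpha)}$ dominates $\frac{2km\log(2km/\delta)}{\beta g(\alpha)} = \frac{2km\log(km/(\delta/2))}{\beta g(\alpha)}$, with probability at least $1-\delta/2$ every $N_\alpha(e_i)$ for $e_i \in A$ contains at least $km\log(km/(\delta/2))$ points. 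Conditioning on this event, I would then apply Lemma~\ref{lem:bound_for_dense_areas} with confidence parameter $\delta/2$ — its input hypothesis $|N_\alpha(e_i)| \ge km\log(km/(\delta/2))$ is precisely what the previous step delivers — to conclude that, with probability at least $1-\delta/2$, every mini-batch $\M_j$ meets all $k$ dense areas and hence $|f(A)-f(A_j)| \le \lambda\alpha k$ for all $j$. A union bound over the two bad events leaves probability at least $1-\delta$ for the conjunction, on which $|f(A)-f(A_j)| \le \lambda\alpha k = \lambda\cdot\frac{\epsilon}{\lambda k}\cdot k = \epsilon$ for every $j$; taking $\alpha$ infinitesimally below $\epsilon/(\lambda k)$ upgrades this to the strict inequality without materially changing the lower bound on $n$, which is why the displayed constant is stated with slack.

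The only point requiring care — and the reason the logarithm's argument is inflated to $4km/\delta$ rather than $2km/\delta$ — is the bookkeeping that glues the two lemmas together: the count of guaranteed neighbors output by Lemma~\ref{lem:dense_areas} must dominate the count demanded as a hypothesis by Lemma~\ref{lem:bound_for_dense_areas} at the same confidence level, and the extra factor of two provides room for the $(1-1/m)^m \to e^{-1}$ approximation used inside Lemma~\ref{lem:bound_for_dense_areas} and for the strictness conversion. I do not anticipate any genuine analytic obstacle beyond this constant-tracking.
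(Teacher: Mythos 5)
Your proposal is correct and matches the paper's argument: the paper derives this corollary directly by combining Lemma~\ref{lem:dense_areas} and Lemma~\ref{lem:bound_for_dense_areas} with the radius $\alpha = \epsilon/(\lambda k)$, exactly as you do. Your explicit $\delta/2$ splitting and union bound simply fill in bookkeeping the paper leaves implicit, and it is consistent with the $\log(4km/\delta)$ constant in the statement.
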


\begin{lemma}[Bound for local evaluation]\label{lem:local_evaluation}
    Let $n_0$ be an integer such that for $n \geq n_0$ we have $\frac{n}{ln(n)} \geq \frac{mk}{\epsilon^2}$. If $n \geq \max \left( n_0, \frac{m \log(2m/\delta)}{\epsilon^2}\right)$, with a probability of at least $1 - \delta$, we can evaluate $f$ on each mini-batch $\M_j$ with a small error of $\epsilon$, i.e., $|f_{\M_j}(S) - f(S)| < \epsilon$.
\end{lemma}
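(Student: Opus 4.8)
The plan is to establish a uniform concentration bound: the facility-location objective restricted to a random mini-batch $\M_j$ stays within $\epsilon$ of the full-data objective $f$, simultaneously over every subset the greedy algorithm can query. Fix a mini-batch index $j$ and a subset $S\subseteq V$ with $|S|\le k$. Since a random partition of $V$ into $m$ equal parts makes each $\M_j$ a uniformly random size-$r$ subset of $V$ with $r=n/m$, the quantity $f_{\M_j}(S)=\frac{1}{r}\sum_{i\in\M_j}f_i(S)$ is the mean of $r$ values drawn \emph{without replacement} from the finite population $\{f_i(S)\}_{i\in V}$, whose mean is exactly $f(S)$ by definition of $f$. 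By the Bound assumption each $f_i(S)\in[0,1]$, so the without-replacement version of Hoeffding's inequality (sums of bounded variables concentrate at least as well under sampling without replacement as under i.i.d.\ sampling) gives
\[
\Pr\!\big[\,|f_{\M_j}(S)-f(S)|\ge\epsilon\,\big]\ \le\ 2\exp(-2r\epsilon^2).
\]

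Next I would take a union bound over the $O(n^k)$ subsets $S$ of size at most $k$ (these are the only sets the greedy algorithm ever evaluates, through its marginal gains) and over the $m$ mini-batches, obtaining
\[
\Pr\!\big[\,\exists\, j\le m,\ \exists\, S,\,|S|\le k:\ |f_{\M_j}(S)-f(S)|\ge\epsilon\,\big]\ \le\ 2mn^k\exp(-2r\epsilon^2).
\]
Substituting $r=n/m$, this is at most $\delta$ as soon as $\tfrac{2n\epsilon^2}{m}\ge\ln(2m/\delta)+k\ln n$. Now the hypothesis $n\ge n_0$ is precisely $\tfrac{n}{\ln n}\ge\tfrac{mk}{\epsilon^2}$, hence $\tfrac{n\epsilon^2}{m}\ge k\ln n$; the hypothesis $n\ge\tfrac{m\log(2m/\delta)}{\epsilon^2}$ gives $\tfrac{n\epsilon^2}{m}\ge\log(2m/\delta)$; adding these two inequalities yields exactly $\tfrac{2n\epsilon^2}{m}\ge\ln(2m/\delta)+k\ln n$. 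Thus with probability at least $1-\delta$, $|f_{\M_j}(S)-f(S)|<\epsilon$ holds simultaneously for every mini-batch $\M_j$ and every $S$ with $|S|\le k$, which is the claim — and is exactly what is needed, together with Corollary~\ref{cor:bound_local_solution}, to transfer near-optimality of the greedy solution on $\M_j$ to near-optimality for $f$ in Theorem~\ref{thm:local_evaluation}.

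The argument is essentially routine, and the only points I expect to require care are: (i) justifying Hoeffding for a random partition rather than for i.i.d.\ draws — one either cites the without-replacement form directly, or couples the partition to a with-replacement sampling model and absorbs the $O(r^2/n)$ collision probability into $\delta$ — and (ii) being explicit that the uniform guarantee must cover all $S$ with $|S|\le k$ (not merely the final size-$k$ set), so that the union-bound cardinality is $n^{O(k)}$ and the resulting $k\ln n$ term in the exponent is precisely the term the $n_0$ condition is calibrated to dominate, while the $\ln(2m/\delta)$ term is precisely handled by the second threshold. Writing the exponent as the sum $k\ln n+\ln(2m/\delta)$ and matching it against the two hypotheses one at a time is what makes the constants line up, so I would present it in that split form.
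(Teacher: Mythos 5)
Your proposal is correct and follows essentially the same route as the paper's proof: Hoeffding's inequality for sampling without replacement applied to the decomposable objective, a union bound over the at most $n^k$ subsets of size at most $k$ and over the $m$ mini-batches, giving the failure probability $2mn^k\exp(-2n\epsilon^2/m)$, which the two hypotheses on $n$ then force below $\delta$. In fact you are slightly more explicit than the paper in showing how the two thresholds ($n\epsilon^2/m \geq k\ln n$ from $n\geq n_0$ and $n\epsilon^2/m \geq \log(2m/\delta)$ from the second condition) add up to dominate the exponent, a step the paper leaves implicit.
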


\begin{proof}
    Note that each mini-batch has exactly $|\M_j| = n/m$ elements. 
    Let us define $\xi_j(S)$ the event that $|f_{\M_j}(S) - f(S)| < \epsilon$, for some fixed $\epsilon < 1$ and a fixed $S$ with $|S| \leq k$. Note that $\xi_j(S)$ denotes the event that the empirical mean $f_{\M_j}(S)$ is close to the true mean. Because $f$ is decomposable, we have $f_{\M_j}(S) = \frac{1}{|\M_j|} \sum_{i \in \M_j} f_j(S) = \sum_{i \in \M_j} \frac{f_j(S)}{|\M_j|}$. Also remember that $0 \leq \frac{f_j(S)}{|\M_j|} \leq \frac{1}{|\M_j|}$. Based on the Hoeffding inequality (without replacement) we have
    \begin{align}
        P[\neg \xi_i(S)] = P[f_{V_i}(S) - f(S) \geq \epsilon] &= P[f_{V_i}(S) - E[f_{V_i}(S)] \geq \epsilon] \nonumber \\
        &\leq 2\exp\left(-\frac{2 \epsilon^2}{|V_i| (\frac{1}{|V_i|} - 0)^2}\right) \nonumber \\
        &= 2\exp(-2 \epsilon^2 |V_i|) \nonumber \\
        &= 2\exp(-2n\epsilon^2/m).
    \end{align}
    
    Let $\xi_i$ be an event that $|f_{V_i}(S) - f(S)| < \epsilon$ for any $S$ such that $|S| \leq k$. Note that there are at most $n^k$ sets of size at most $k$ (because sampling k samples with replacement results in a subset of size at most $k$). Hence,
    \begin{align}
        P[\neg \xi_i] \leq 2n^k \exp(-2n\epsilon^2/m)
    \end{align} 
    There are $m$ mini-batches, by the union bound we can conclude that 
    \begin{align}
        P[\bigcup\limits_{i=1}^m \neg \xi_i] \leq \sum_{i=1}^m P[\neg \xi_i] \leq 2mn^k \exp(-2n\epsilon^2/m)
    \end{align}
    The above calculation implies that we need to choose $\delta \geq 2mn^k \exp(-2n\epsilon^2/m)$ so that w.h.p $1 - \delta$ we can evaluate $f$ locally on each mini-batch. For large $n$, the function $\frac{n}{ln(n)}$ is an increasing function, thus, there exists $n_0$ such that for $n \geq n_0$, $\frac{n}{ln(n)} \geq \frac{mk}{\epsilon^2}$. Then, we choose $n$ as follows
    \begin{align}
        n = \max \left( n_0, \frac{m \log(2m/\delta)}{\epsilon^2}\right)
    \end{align}
\end{proof}

For $|f_{V_i}(S)-f(S)|<\epsilon$ to hold for all subsets $S$ such that $|S|\leq k$, the data distribution of $V_i$ should be similar to that of $V$. Hence, $k$-medoids of $V_i$ are in close neighborhood of $k$-medoids of $V$.

\begin{lemma}[Upper bound for the variance]\label{lem:variance_bound}
    Let the number of outliers which do not belong to any $k$ dense area be $\kappa$. Let $\alpha_u > \alpha^\star$ be the largest distance from an outlier to any centroids. Assume that all the selected samples $A_j$ belong to the dense areas. The upper bound of the variance of the local optimal solution $A_j$ is smaller than that of the random subset of size $k$.
\end{lemma}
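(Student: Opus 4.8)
\emph{Proof proposal.} The plan is to route the variance through the facility-location reconstruction cost of a subset and then compare that cost for the local medoids $A_j$ with that of a uniformly random $k$-subset. First I would recall the standard link used throughout the coreset literature \citep{mirzasoleiman2020coresets}: assigning each $v\in\M_j$ to its nearest selected point and weighting each selected point by its cluster size yields a weighted gradient estimator whose deviation from the full mini-batch gradient is at most the average assignment distance, $\|\g_{\M_j}-\g_{S}\|\le L(S)=\tfrac{1}{|\M_j|}\sum_{v\in\M_j}\min_{e\in S}d(v,e)$. Consequently the variance of the estimator about $\g_V$ is bounded by an (essentially quadratic) monotone function of $L(S)$, so it suffices to upper bound $L(A_j)$ and to exhibit the worst-case $L(S_{\mathrm{rand}})$ for a random $S_{\mathrm{rand}}$ with $|S_{\mathrm{rand}}|=k$.

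For the coreset I would split $\M_j$ into the points lying in the $k$ dense areas and the outliers, of which there are $\kappa$ in $V$ and therefore $\kappa/m$ in $\M_j$ in expectation. A dense-area point lies within $\alpha^\star$ of its own centroid, and by Corollary~\ref{cor:bound_local_solution} (and the argument behind Theorem~\ref{thm:local_evaluation}) the medoids of $A_j$ lie inside the $\alpha$-neighborhoods of the true centroids $A$ with high probability; since $\alpha\le\alpha^\star$, the nearest selected medoid is at distance $\le\alpha^\star$. An outlier is, by the definition of $\alpha_u$, within $\alpha_u$ of any centroid, hence within $\alpha_u$ of the (dense-area) selected medoid nearest to it. Averaging over $\M_j$ gives $L(A_j)\le\alpha^\star+\tfrac{\kappa}{m}(\alpha_u-\alpha^\star)$ under the normalization used in Theorem~\ref{thm:variance_reduction}; in the best case (no outlier has to be served suboptimally) $L(A_j)$ is as small as $\alpha^\star$.

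A uniformly random $k$-subset carries no such coverage guarantee: it can place several representatives in one dense area, leaving others under-served, and can place representatives on outliers, so in the worst case its assignment distances are inflated by the full outlier penalty and $L(S_{\mathrm{rand}})$ reaches $\alpha^\star+\tfrac{\kappa}{m}(\alpha_u-\alpha^\star)$ while the coreset stays at $\alpha^\star$. Substituting these two values into the quadratic from the first step and using
\begin{equation*}
\Big(\alpha^\star+\tfrac{\kappa}{m}(\alpha_u-\alpha^\star)\Big)^{2}-\big(\alpha^\star\big)^{2}=\tfrac{\kappa}{m}(\alpha_u-\alpha^\star)\Big(2\alpha^\star+\tfrac{\kappa}{m}(\alpha_u-\alpha^\star)\Big)
\end{equation*}
reproduces the bound stated in Theorem~\ref{thm:variance_reduction}; the qualitative claim of Lemma~\ref{lem:variance_bound} — that the coreset's variance bound is the smaller one — then follows immediately since $\alpha_u>\alpha^\star$.

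The geometry and algebra above are routine. The main obstacle is the first step: making the passage from a small reconstruction cost $L(S)$ to a small variance of the gradient estimator fully rigorous, in particular (i) justifying the quadratic (rather than merely linear) dependence on $L(S)$; (ii) tracking the residual $\sigma^2/|\M_j|$-type sampling term of the enclosing mini-batch and the cross term, so that they are common to both estimators and cancel in the difference; and (iii) replacing the \emph{expected} outlier count $\kappa/m$ by a deterministic-looking bound via a Chernoff argument as in Lemma~\ref{lem:dense_areas}. One also has to verify that the assumption that all selected samples lie in the dense areas is compatible with $A_j$ remaining a (near-)optimal facility-location solution on a mini-batch that does contain outliers.
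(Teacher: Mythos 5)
There is a genuine gap, and it comes from bounding the wrong quantity. The paper's proof does not go through the facility-location reconstruction cost $L(S)$ of the mini-batch at all: it writes the variance of the \emph{centroid (mean) of the selected size-$k$ subset} as $\mathrm{Var}(S_j^c)=\mathrm{Var}(A^c+\epsilon_j)=\mathrm{Var}(\epsilon_j)\leq E[\epsilon_j^2]$, where $\epsilon_j$ is the distance from the subset's centroid to the centroid of the true medoid set $A$, and then separates the two cases through the \emph{composition of the selected subset itself}: under the lemma's assumption every sample of $A_j$ lies in a dense area, hence within $\alpha^\star$ of a true centroid, so $\epsilon_j\leq\alpha^\star$; a random $k$-subset contains $\kappa/m$ outliers in expectation, each only within $\alpha_u$ of a centroid, so $\epsilon_j\leq(1-\tfrac{\kappa}{m})\alpha^\star+\tfrac{\kappa}{m}\alpha_u=\alpha^\star+\tfrac{\kappa}{m}(\alpha_u-\alpha^\star)$. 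Differencing the squares of these two bounds is exactly what produces the expression in Theorem~\ref{thm:variance_reduction}. In other words, the assumption ``all selected samples belong to the dense areas'' is the engine of the proof (it is what caps the coreset's $\epsilon_j$ at $\alpha^\star$), whereas in your proposal it carries no load and you even question at the end whether it is compatible with the rest of the argument.

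Your route through $L(S)$ cannot deliver the required separation, because both the coreset and the random subset must serve the \emph{same} outliers of $\M_j$: your own derivation gives $L(A_j)\leq\alpha^\star+\tfrac{\kappa}{m}(\alpha_u-\alpha^\star)$, which is identical to the value you then assign to the random subset. You resolve this by comparing the coreset's ``best case'' ($\alpha^\star$) against the random subset's ``worst case'', which does not establish that one upper bound, derived on equal footing, dominates the other — and the random subset's worst-case reconstruction cost is in fact not $\alpha^\star+\tfrac{\kappa}{m}(\alpha_u-\alpha^\star)$ anyway, since a random $k$-subset can miss a dense area entirely and incur assignment distances unrelated to $\alpha_u$. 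The remaining difficulties you flag (justifying a quadratic dependence of the variance on $L(S)$, handling the residual sampling and cross terms) are artifacts of this detour and simply do not arise in the paper's argument; minor issues such as the missing triangle-inequality terms ($\alpha^\star+\alpha$ rather than $\alpha^\star$ for serving a dense point by a selected medoid) are secondary by comparison.
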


\begin{proof}
    For each subset $S$, we use the notation $S^c$ to denote the centroid of this subset. Let $\epsilon_j$ be the distance between the centroid of a subset $S_j$ of size $k$ to the centroid of $A$. The variance of the subset $S_j$ has an upper bound as follow.
    \begin{align}
        \text{Var}(S_j^c) &= \text{Var}(A^c + \epsilon_j) \nonumber \\
        &= \text{Var}(\epsilon_j) \nonumber \\
        &\leq E[\epsilon_j^2].
    \end{align}
    For each local optimal solution $A_j$, we know that $\epsilon_j \leq \alpha^\star$. For a random subset $S_j$ of size $k$, there is $\kappa/m$ outliers and $k - (\kappa/m)$ examples from the dense areas in the subset on average. Thus, the distance $\epsilon_j$ is bounded as $\epsilon_j \leq (1 - \frac{\kappa}{m}) \alpha^\star + \frac{\kappa}{m} \alpha_u \geq \alpha^\star$. Therefore, the upper bound of a random subset $S_j$ is larger than that of the select subset $A_j$. 
\end{proof}

From the above lemma, we can conclude that

\begin{theorem}[Variance reduction]\label{thm:variance_reduction_appdx}
    The variance of the mini-batch coresets of size $b$ is smaller than the variance of the random subset of size $b$ by up to $\frac{\kappa}{m} (\alpha_u - \alpha^\star)(2 \alpha^\star + \frac{\kappa}{m}(\alpha_u - \alpha^\star))$.
\end{theorem}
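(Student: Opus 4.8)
The plan is to bound the quantity $E[\epsilon_j^2]$ appearing in Lemma~\ref{lem:variance_bound} separately for the coreset and for a random subset of the same size, and then subtract. The core of the argument is already laid out in Lemma~\ref{lem:variance_bound}: for any size-$k$ subset $S_j$, writing $S_j^c = A^c + \epsilon_j$ where $\epsilon_j$ is the displacement of the subset centroid relative to the centroid $A^c$ of the dense centroids, we have $\mathrm{Var}(S_j^c) = \mathrm{Var}(\epsilon_j) \le E[\epsilon_j^2]$. So the whole question reduces to controlling the magnitude of the centroid displacement in the two cases.

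First I would handle the coreset case. By hypothesis every selected sample $A_j$ lies in a dense area, i.e.\ within $\alpha^\star$ of one of the true centroids; hence the displacement of the coreset's centroid is at most $\alpha^\star$, giving $\mathrm{Var}(A_j^c) \le (\alpha^\star)^2$. Second, for a uniformly random size-$k$ subset, the expected number of outliers included is $\kappa/m$ (there are $\kappa$ outliers, partitioned at random into $m$ parts), and the remaining roughly $k-\kappa/m$ points lie in dense areas, hence within $\alpha^\star$ of a centroid, while each outlier contributes a displacement of at most $\alpha_u$. A triangle-inequality / convexity bound on the centroid of a set of points then yields $\epsilon_j \le \bigl(1-\tfrac{\kappa}{m}\bigr)\alpha^\star + \tfrac{\kappa}{m}\alpha_u = \alpha^\star + \tfrac{\kappa}{m}(\alpha_u-\alpha^\star)$ for the random subset. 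Squaring and subtracting the coreset bound $(\alpha^\star)^2$ gives
\begin{align}
\Bigl(\alpha^\star + \tfrac{\kappa}{m}(\alpha_u-\alpha^\star)\Bigr)^2 - (\alpha^\star)^2
= \tfrac{\kappa}{m}(\alpha_u-\alpha^\star)\Bigl(2\alpha^\star + \tfrac{\kappa}{m}(\alpha_u-\alpha^\star)\Bigr),
\end{align}
which is exactly the claimed gap. Since $\alpha_u > \alpha^\star$ this quantity is positive, so the coreset variance bound is strictly smaller, and one lifts the statement from size $k$ (per-part medoids) to size $b$ (the assembled mini-batch coreset) by noting the coreset is a disjoint union of per-source medoid sets, each of whose selected points lies in a dense area by assumption, so the same $\le \alpha^\star$ displacement bound applies to the aggregate.

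The main obstacle is making the centroid-displacement inequality $\epsilon_j \le (1-\kappa/m)\alpha^\star + (\kappa/m)\alpha_u$ rigorous rather than heuristic: the number of outliers in a random subset is itself random, so one should either argue in expectation (replacing $\kappa/m$ by $E[\#\text{outliers}]/k$ and invoking linearity, which is what the statement's "on average" phrasing suggests) or pass to a concentration bound; and the step that the centroid of a point set lies within the convex-combination-weighted average of the individual displacement radii needs the triangle inequality applied to $\|\frac1k\sum_i (s_i - A^c)\| \le \frac1k\sum_i \|s_i - A^c\|$ together with the per-point radius bounds. Everything else is routine algebra. I would also flag that the theorem as stated bounds the \emph{upper bounds} on the variances (as Lemma~\ref{lem:variance_bound} does), not the variances themselves, so the proof should be phrased in those terms for consistency.
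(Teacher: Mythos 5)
Your argument is essentially the paper's own: the paper proves this via its upper-bound-for-the-variance lemma, bounding the centroid displacement by $\alpha^\star$ for the selected medoids and by $(1-\tfrac{\kappa}{m})\alpha^\star + \tfrac{\kappa}{m}\alpha_u$ for a random subset, then squaring and subtracting to get exactly the stated gap. Your added caveats (the ``on average'' outlier count and the fact that one is really comparing upper bounds on the variances rather than the variances themselves) are fair observations, but they apply equally to the paper's proof, so the proposal matches it.
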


\section{Pseudo-code}\label{apx:pseudo_code}
Algorithm~\ref{alg:colm} illustrates the pseudo-code of our \alg.
 
\begin{algorithm}[!h]
    \caption{\name\ (\alg) on Imbalanced Language Data}
    \label{alg:colm}
    \begin{algorithmic}[1]\onehalfspacing
        \STATE {\bfseries Input:} $\thet \in \mathbb{R}^d$, loss $\LL: \mathbb{R}^d \rightarrow \mathbb{R}$, step budget $T$, %
        batch size $b$, learning rate schedule $\{\eta_t\}$, small sources $\{V_1, \cdots, V_p\}$, large sources $\{V_{p+1}, \cdots, V_{Q}\}$
        \FOR {$t = 1, \cdots, T $}
        \STATE Sample batch $\M_t \subset \D$ 
        \STATE $S_s^t \leftarrow \{v\in \M_t | v \in \bigcup_{i \in [p]} V_p\}$ {\COMMENT{Keep all samples from small sources in the batch}}
        \FOR {$i \in \{p+1, \cdots, q\}$} 
        \STATE $V_i^t \leftarrow \{v\in \M_t | v \in V_i\}$ {\COMMENT{Find all samples from each big source}}
        \STATE $b_i \leftarrow (b-|S_t^s|).|V_i^t|/(|\M_t|- |S_t^s|)$ {\COMMENT{Calculate the number of selected samples}}
        \STATE {Get the zeroth-order gradient $\hat{g}_{i,t}^{vp}$ of the last (LoRA) V-projection using Eq~\ref{eq:mezo_last}}.
        \STATE {Calculate the normalized gradient $\frac{\hat{\m}_{t,i}}{\epsilon+\sqrt{\hat{\vv}_{t,i}}}$ from historical terms and zeroth-order gradient $\hat{g}_{i,t}^{vp}$ uisng Eq~\ref{eq:adam}}.
        \STATE {Create a mask vector $M_q^t$ for top $h$ parameters with largest magnitude.}
        \STATE $S_{i}^t \leftarrow \argmax_{S \subset V_i^t, |S|\leq b_i} \sum_{i\in V_i^t}\max_{s \in S} [C- \|\frac{\hat{\m}_{t,i}}{\epsilon+\sqrt{\hat{\vv}_{t,i}}}\odot M_i^t - \frac{\hat{\m}_{t,s}}{\epsilon+\sqrt{\hat{\vv}_{t,s}}}\odot M_i^t \|_1 ]. $ {\COMMENT{Solve the submodular facility location optimization problem}}
        \ENDFOR
        \STATE $S_t \leftarrow \{S_s^t \cup S_{p+1}^t \cup \cdots \cup S_{q}^t\}$ 
        \STATE $\thet \leftarrow \thet -\eta \nabla \LL_{S_t}(\thet)$ 
        \ENDFOR
    
    \end{algorithmic}
\end{algorithm}

\section{Fine-tuning settings}\label{apx:finetuning_settings}

\textbf{Training datasets.} We use the MathInstruct~\citep{yue2023mammoth} dataset for the challenging task of mathematical reasoning. MathInstruct consists of about 260K instruction tuning examples, curated from 14 highly imbalanced open-source math datasets, with broad coverage of mathematical fields and a wide range of difficulty levels. The ratio of the largest to smallest source in MathInstruct is almost 300, and the distribution of sources can be found in Fig \ref{fig:mathinstruct_data_source} in the Appendix. 
Fine-tuning on MathInstruct has shown state-of-the-art performance on a variety of standard math evaluation benchmarks. 
For datasets without specific sources, we use three datasets from the SuperGLUE benchmark \citep{wang2019superglue} for the classification task: SST-2, CB, and MultiRC. 
For CB, we use the full training dataset, which consists of 250 examples. For SST-2 and MultiRC, we randomly sample 3K examples for fine-tuning. Notably, compared to the 1K example setting used in \citep{malladi2023fine}, we sample more data for datasets with larger sizes because we use a much larger batch size. \looseness=-1

\textbf{Training details.} Following the setup used in~\citep{yue2023mammoth}, we adopt a training regime with a learning rate of 2e-5 and a cosine scheduler with a 3\% warm-up period, i.e. the learning rate linearly increases from 0 to 2e-5 over the first 3\% of training steps, then follows a cosine decay to 0 at the end of training. We set a maximum sequence length of 512.
For all experiments on MathInstruct, we standardize the number of gradient steps to correspond to 1K, unless explicitly specified.  
{To simulate a larger batch size, we have also used a gradient accumulation step of 8 in our experiments.}
We use LoRA with a rank of 128, alpha of 512, and dropout rate of 0.05. For Phi models, we apply LoRA to all attention matrices (i.e. QKV\_proj) and two fully connected layers while for Zephyr, we apply LoRA to all attention matrices (i.e. QKVO\_proj). All experiments are run on 4 NVIDIA A40 GPUs. We repeat each experiment three times.

\textbf{Evaluation datasets.} Following \citep{yue2023mammoth}, we use a variety of popular datasets across both in-domain and out-of-domain datasets. The in-domain datasets include GSM8K \citep{cobbe2021gsm8k}, MATH \citep{hendrycks2021measuring}, and NumGLUE \citep{mishra-etal-2022-numglue}. For the out-of-domain datasets, we include SVAMP \citep{patel-etal-2021-nlp}, Mathematics \citep{davies2021advancing}, and SimulEq \citep{koncel-kedziorski-etal-2016-mawps}. These datasets collectively cover a wide range of mathematical areas such as algebra, probability, number theory, calculus, and geometry. Furthermore, some questions in these datasets require the application of commonsense, reading comprehension, and multi-step reasoning. All questions are formatted as open-ended. 

\textbf{Evaluation metric.} We use the standard evaluation metric for open-formed questions, exact match, which measures the model’s accuracy by comparing its generated answers against the correct solutions. For an answer to be considered correct, it must match the reference solution precisely. We evaluate under the 0-shot setting with a maximum sequence length of 2048 tokens for decoding. The default prompt is Program-of-Thought (PoT), falling back to Chain-of-Thought (CoT) prompting if the former does not work~\citep{yue2023mammoth}.

\begin{figure*}[t!]
    \centering
    \begin{subfigure}{0.45\textwidth}
        \centering
        \includegraphics[width=\columnwidth]{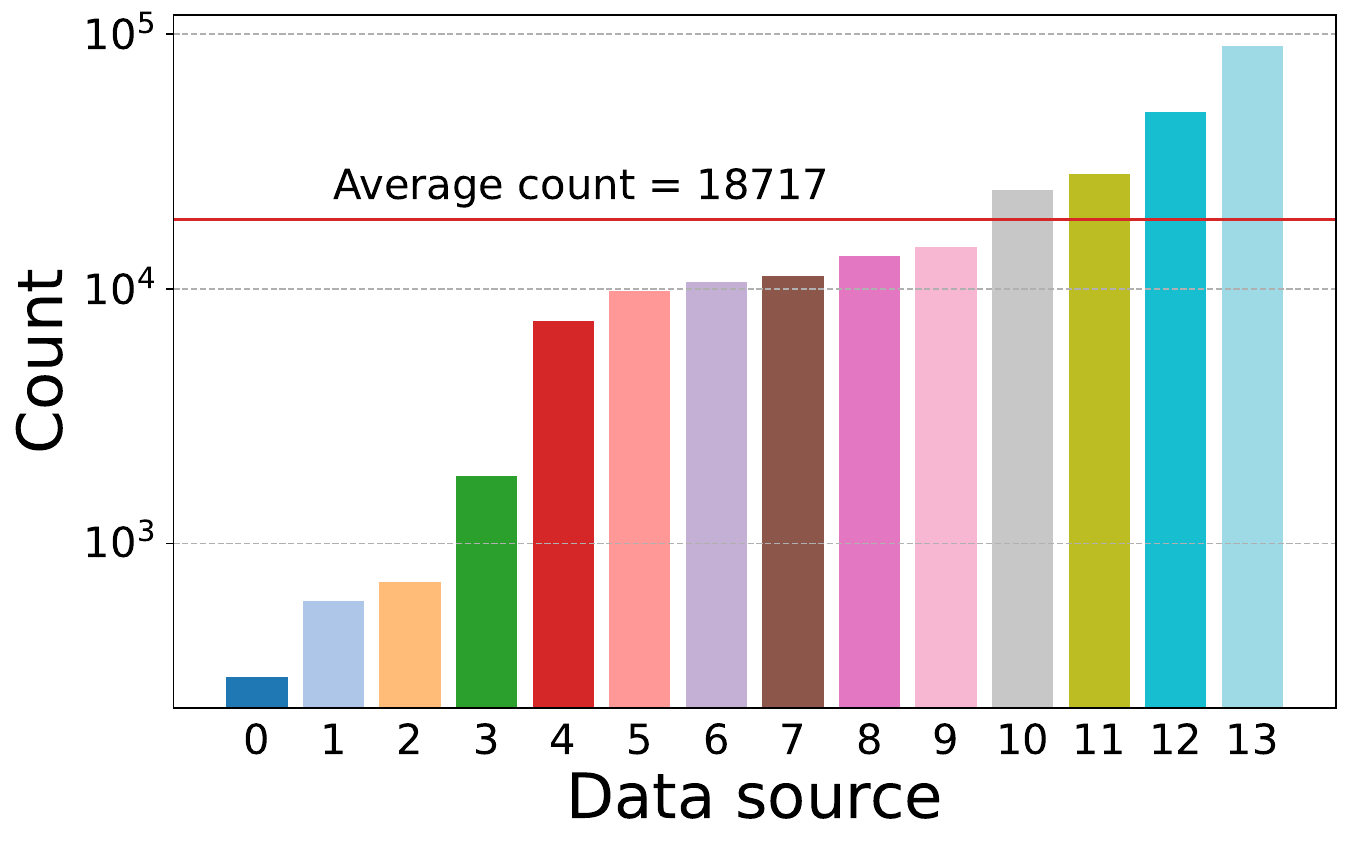}
        \phantomsubcaption
        \label{fig:mathinstruct_data_source}
    \end{subfigure}
    \hfill
    \begin{subfigure}{0.45\textwidth}
        \centering
        \includegraphics[width=\columnwidth]{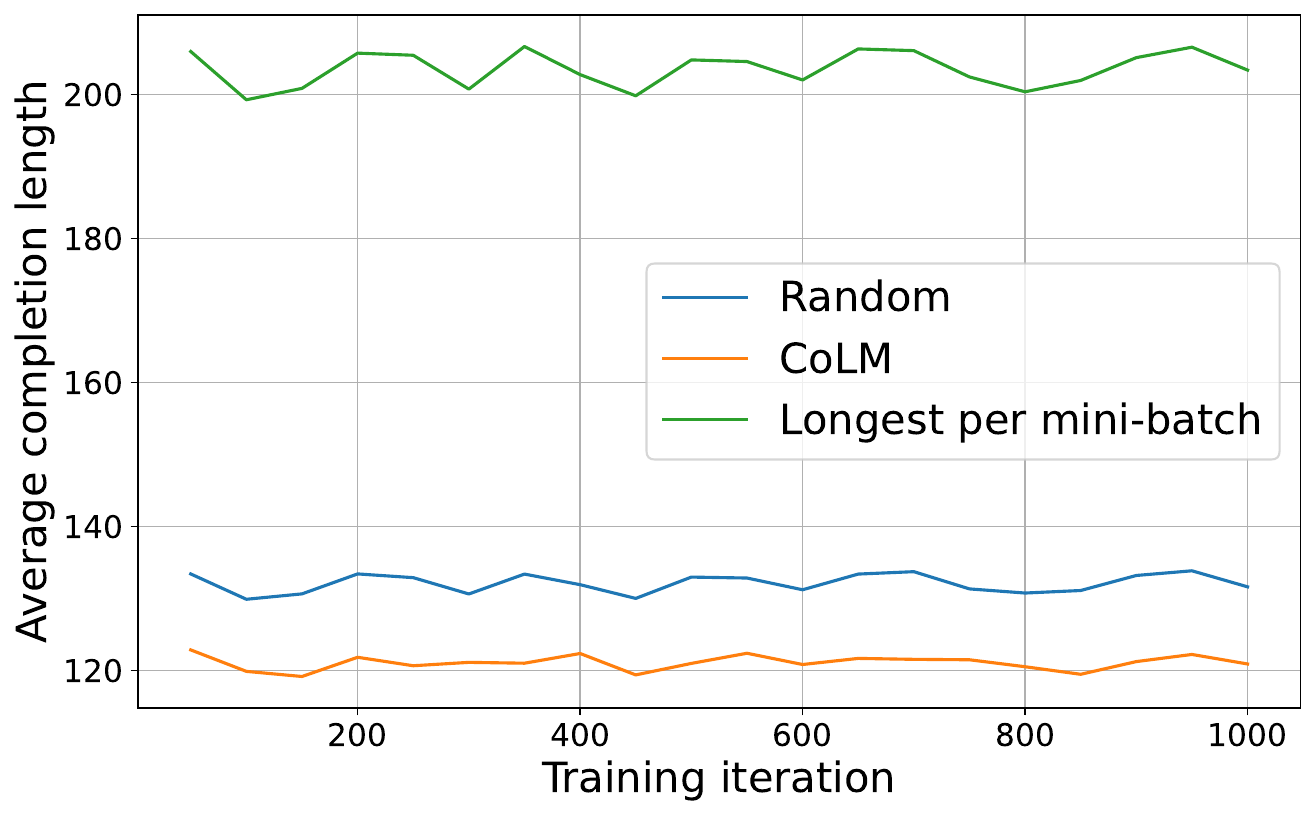}
        \phantomsubcaption
        \label{fig:completion_length}
    \end{subfigure}
    \hfill
    \caption{(a) Data distribution of different data sources in MathInstruct. (b) The average completion length of examples selected by \alg vs. random examples and longest examples in random batches.}
    \label{fig:data_statisitcs}
\end{figure*} 

\textbf{Small sources.} Figure~\ref{fig:mathinstruct_data_source} visualizes the data distribution of the MathInstruct dataset. It can be seen that the dataset is highly imbalanced with most of the samples belonging to 4 large sources (10 - 13). Therefore, we use a simple heuristic to consider any data sources whose sizes are below the average count as small sources.

\section{Additional fine-tuning results}\label{apx:additional_results}
\subsection{Training loss}
{For fine-tuning LLMs, downstream performance is generally a better metric than training loss or perplexity. In addition, perplexity does not always correlate with actual task performance on diverse downstream tasks~\cite{liu2023same}. Furthermore, the MathInstruct dataset doesn’t have a held-out validation set to calculate the perplexity. For completeness, we added the training loss in Figure~\ref{fig:training_loss}. \alg~consistently yields smallest loss throughout the whole training process.}

\begin{figure*}[t!]
    \centering
    \begin{subfigure}{0.45\textwidth}
        \centering
        \includegraphics[width=\columnwidth]{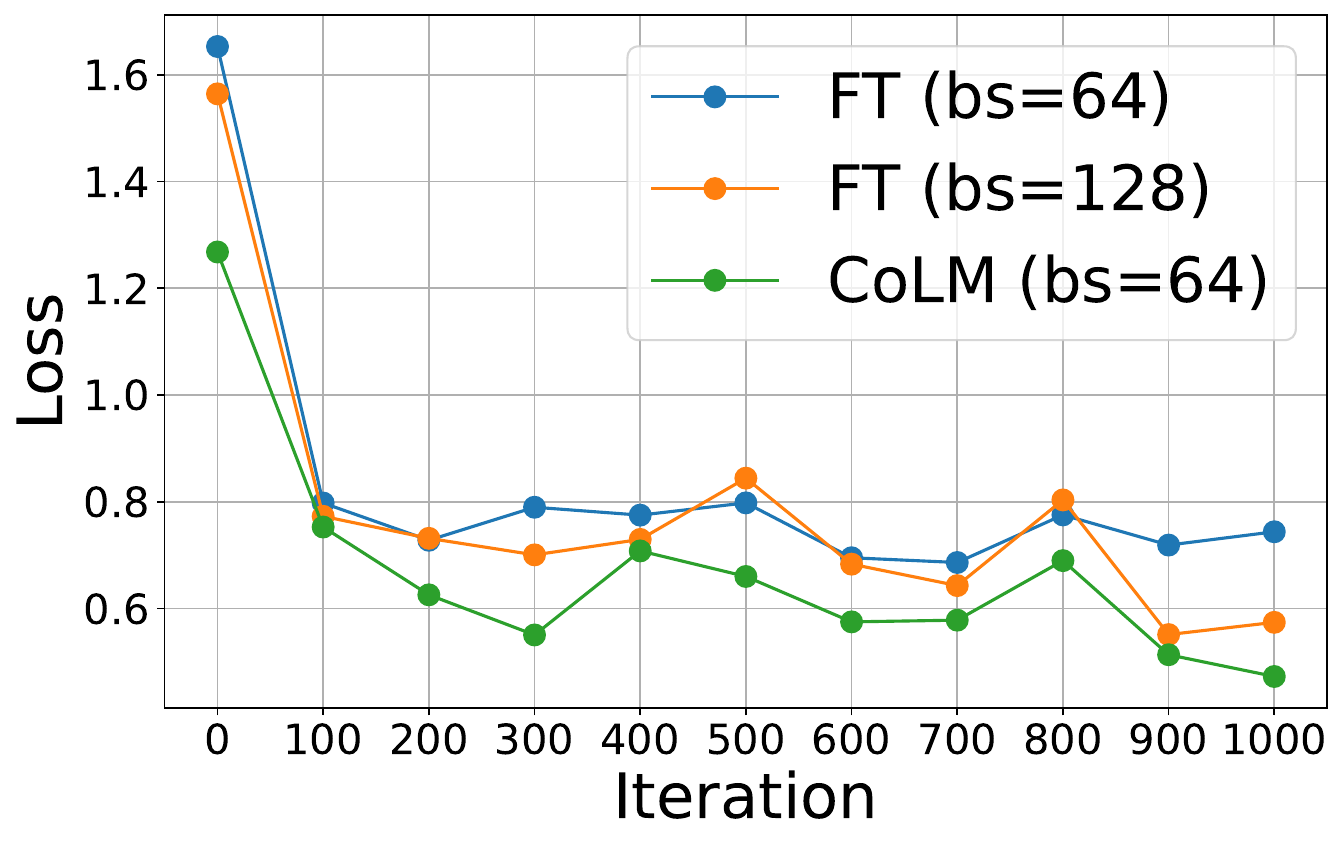}
        \caption{Training loss trajectory}
        \label{fig:training_loss}
    \end{subfigure}
    \hfill
    \begin{subfigure}{0.45\textwidth}
        \centering
        \includegraphics[width=\columnwidth]{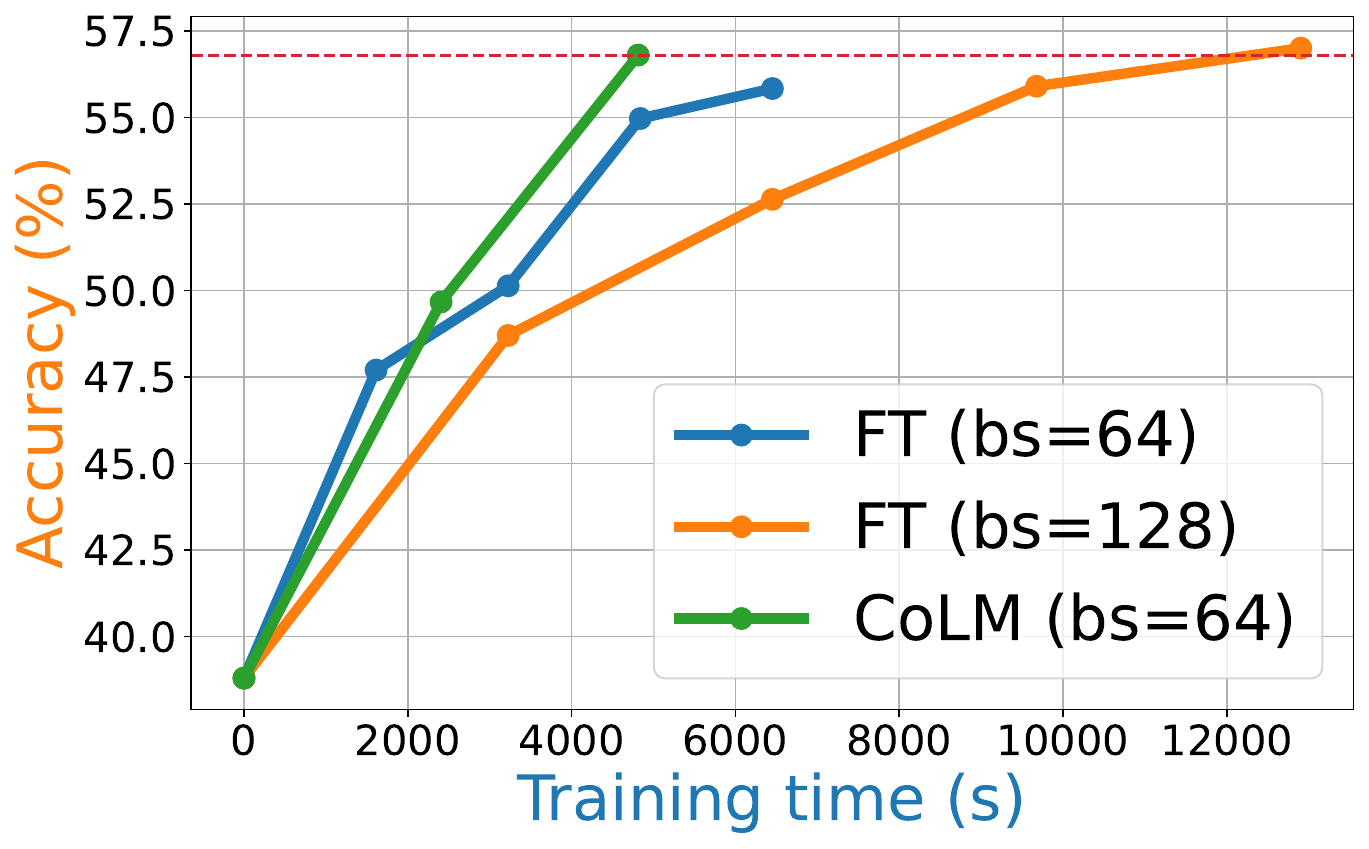}
        \caption{Convergence plot}
        \label{fig:convergence_plot_time}
    \end{subfigure}
    \hfill
    \caption{Fine-tuning Phi-2 on MathInstruct. 
    (a) \alg~yields smallest loss throughout the whole training process; 
    (b) \alg~reaches the optimal performance in less training time. }
    \label{fig:additional_results}
\end{figure*} 

\begin{table}[!t]
\centering
\begin{minipage}{.43\textwidth}
    \caption{Statistics of pre-training mixture.}
    \label{tab:pile_stats}
    \begin{center}
    \begin{small}
    \scalebox{0.9}{
    \begin{tabular}{ccc}
        \toprule
        Dataset & Weight (\%) & Small/Large \\
        \midrule
        EuroParl & 2.8 & Small \\
        Github & 28.2 & Large \\
        HackerNews & 5.0 & Small \\
        NIH Exporters & 3.4 & Small \\
        Wikipedia (en) & 60.6 & Large \\
        \bottomrule
    \end{tabular}
    }
    \end{small}
    \end{center}
\end{minipage}%
\hfill
\begin{minipage}{.55\textwidth}
    \caption{{Downstream acc of pre-trained Llama-60M.}}
    \label{tab:pretrain_downstream_acc}
    \begin{center}
    \begin{small}
    \scalebox{0.9}{
    \begin{tabular}{cccc}
        \toprule
        Dataset & PT (bs=128) & PT (bs=256) & \alg~(bs=128) \\
        \midrule
        Squad & 25.4 & 24.8 & \textbf{25.9} \\
        WiC & 50.2 & 50.7 & \textbf{51.6} \\
        COPA & 49.0 & 52.0 & \textbf{54.0} \\
        Avg & 41.5 & 42.5 & \textbf{43.9} \\
        \bottomrule
    \end{tabular}
    }
    \end{small}
    \end{center}
\end{minipage}
\end{table}

\begin{figure}[t!]
    \centering
    \includegraphics[width=0.5\columnwidth]{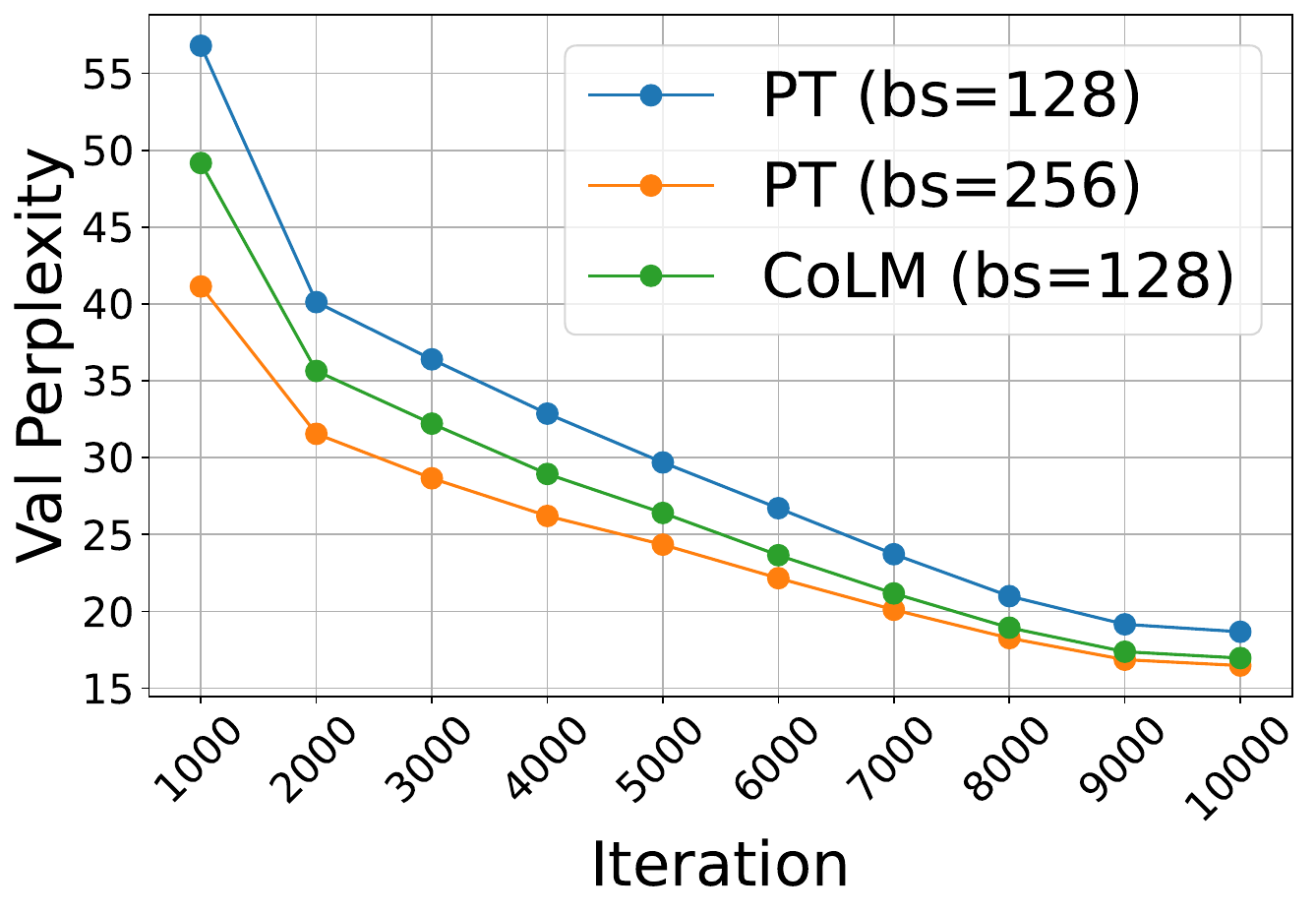}
        \caption{{Validation perplexity when pre-training Llama-60M on a mixture of the Pile dataset.}}
        \label{fig:pretrain_val_perplexity}
\end{figure}

\subsection{Convergence plot in terms of training time}
{To highlight the faster convergence rate of \alg, we plotted Figure~\ref{fig:convergence_plot} with training time as the x-axis. Figure~\ref{fig:convergence_plot_time} still shows that our method converges faster than normal fine-tuning with both smaller and larger batch sizes.}

\section{Memory consumption}\label{apx:memory_consumption}

In all our experiments, we used LoRA and a gradient accumulation step of 8, as 4xA40 GPUs did not have enough memory to hold bs=128.

{\textbf{Memory overhead for CoLM.} The memory overhead of CoLM stems from three main sources the last layer zeroth-order gradient $\hat{g}_{i,t}^{vp}$ in Eq~\ref{eq:mezo_last}, the historical terms of Adam in Eq~\ref{eq:facility_sparse}, and the pairwise dissimilarity matrix when solving Eq~\ref{eq:facility_sparse}. Because with LoRA, the last layer dimension is 327K and 2560 before and after sparsification and the batch size is 128, the memory overhead is \textbf{less than 200MB}. When fine-tuning Phi-2 on MathInstruct with 4xA40 GPUs each with 45G memory (except for bs=256) with LoRA, each GPU can have a maximum device batch-size of 5, so we trained all models with a gradient accumulation step of 8, with LoRA on 4 GPUs. The total batch size = num GPUs x device batch size x gradient accumulation step. Figure~\ref{fig:speedup} illustrates that \alg~(bs=64) outperforms fine-tuning (FT) with bs=256, while requiring 1.8x less memory and being 2.7x faster. Compared to FT bs=128, CoLM requires 20\% less memory, while being 30\% faster, and obtains 4\% higher accuracy.} \looseness=-1

\section{Pre-training experiments}\label{apx:pretrain}
{\textbf{Settings.} We used Llama-60M, which is also used in~\cite{zhao2024galore}, on a mixture of datasets from the Pile dataset~\cite{gao2020pile}. We selected 5 different datasets \textit{without copyright infringement} including EuroParl, Github, HackerNews, NIH Exporter, and Wikipedia (en). For pre-processing the dataset, we divide each dataset into 1024-token chunks and the statistics are given in Table~\ref{tab:pile_stats} in which weight means the percentage of 1024-token chunks of each dataset in the mixture. Following~\cite{zhao2024galore}, we pretrained the model for 10K iterations with a max sequence length of 1024 on 4 GPUs. We also used a gradient accumulation step of 8 similar to fine-tuning experiments. For evaluation, we calculated the perplexity on a held-out validation set. In addition, we calculated the down-stream accuracy on 3 datasets Squad~\cite{rajpurkar2016squad}, WiC~\cite{pilehvar2018wic}, and COPA~\cite{roemmele2011copa}.}

{\textbf{Validation perplexity.} Figure~\ref{fig:pretrain_val_perplexity} illustrates the validation perplexity of pre-trained models at different checkpoints during training. \alg~achieves almost the same perplexity as pre-training with 2x batch size.}

{\textbf{Down-stream performance}. Table~\ref{tab:pretrain_downstream_acc} demonstrates that \alg~improves the downstream accuracy on all 3 evaluation datasets, yielding an improvement of at least 1.4\% on average.}

\newpage

\end{document}